\def\delequal{\mathrel{\ensurestackMath{\stackon[1pt]{=}{\scriptscriptstyle\Delta}}}}
\newtheorem{theorem}{Theorem}
\newtheorem{lemma}{Lemma}
\newtheorem{assumption}{Assumption}
\theoremstyle{definition}
\newtheorem{definition}{Definition}
\newtheorem{remark}{Remark}
\newcommand\copyrighttext{%
  \footnotesize \textcopyright 2021 IEEE.  Personal use of this material is permitted.  Permission from IEEE must be obtained for all other uses, in any current or future media, including reprinting/republishing this material for advertising or promotional purposes, creating new collective works, for resale or redistribution to servers or lists, or reuse of any copyrighted component of this work in other works.}
\newcommand\copyrightnotice{%
\begin{tikzpicture}[remember picture,overlay]
\node[anchor=south,yshift=10pt] at (current page.south) {\fbox{\parbox{\dimexpr\textwidth-\fboxsep-\fboxrule\relax}{\copyrighttext}}};
\end{tikzpicture}%
}
\title{\LARGE \bf
Stability and Robustness Analysis of Plug-Pulling \\using an Aerial Manipulator
}
\author{Jeonghyun Byun$^{1}$, Dongjae Lee$^{1}$, Hoseong Seo$^{1}$, Inkyu Jang$^{1}$, Jeongjun Choi$^{1}$ and H. Jin Kim$^{1}$
\thanks{$^{1}$ The authors are with the Department of Aerospace Engineering, Seoul National University, Seoul, South Korea.
        {\tt\small \{quswjdgus97, ehdwo713, hosung37, leplusbon, lojol2327, hjinkim\}@snu.ac.kr}}%
}
\begin{document}

\maketitle
\copyrightnotice
\thispagestyle{empty}
\pagestyle{empty}

\begin{abstract}
In this paper, an autonomous aerial manipulation task of pulling a plug out of an electric socket is conducted, where maintaining the stability and robustness is challenging due to sudden disappearance of a large interaction force. The abrupt change in the dynamical model before and after the separation of the plug can cause destabilization or mission failure. To accomplish aerial plug-pulling, we employ the concept of hybrid automata to divide the task into three operative modes, i.e, wire-pulling, stabilizing, and free-flight. Also, a strategy for trajectory generation and a design of disturbance-observer-based controllers for each operative mode are presented. Furthermore, the theory of hybrid automata is used to prove the stability and robustness during the mode transition. We validate the proposed trajectory generation and control method by an actual wire-pulling experiment with a multirotor-based aerial manipulator.

\end{abstract}

\section{INTRODUCTION}
Aerial manipulation has been a growing research topic which aims to utilize the maneuverability of an aerial vehicle and the versatility of a robotic manipulator. Different from physically non-interacting passive tasks such as surveillance and remote sensing, an aerial manipulator can execute active tasks involving physical interaction such as grasping \cite{kim2013aerial,kim2019sampling}, valve turning \cite{korpela2014towards}, drawer opening \cite{kim2015operating}, contact inspection \cite{jimenez2015aerial}, transportation \cite{lee2016estimation,byun2020line}, and door opening \cite{lee2020aerial_1}. 

Despite various demonstrations of aerial manipulation tasks involving contact with environments, they usually involved a relatively low level of changes in the dynamic characteristics, and they rarely dealt with the transition during the physical interaction explicitly. In fact, there is a lack of research on the stability of aerial manipulation before and after the physical interaction. When a mode switch entails a significant change in the system response, neglecting it can lead to destabilization. Therefore, it is necessary to systematically analyze dynamical modes and design a robust controller to more realistically embrace the whole operation. 

As an example of aerial manipulation involving a drastic {change in dynamics}, this paper deals with the problem of pulling a plug from a socket using a multirotor equipped with a two degree-of-freedom (DOF) robotic arm. In this task, the large force exerted on the end-effector suddenly disappears after the plug is separated from the socket. For formal analysis of the stability, we first formulate hybrid automata \cite{goebel2009hybrid} which enclose all the dynamic models and the operative modes that have their own control laws different from another. Then, we design disturbance-observer (DOB)-based controllers \cite{back2009inner} for the respective operative modes and prove the stability and robustness of the formulated hybrid automata \cite{marconi2009control}.
\begin{figure}[t]
\centering
\includegraphics[width = 0.45\textwidth]{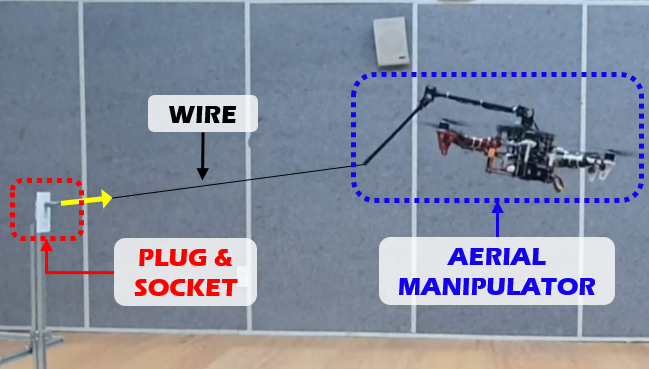}
\caption{An aerial manipulator, a multirotor equipped with a 2-DOF robotic arm, is pulling a plug from a socket. The plug is connected to a wire grabbed by the end effector of the aerial manipulator.} \label{fig: wire-pulling aerial manipulator}
\vspace{-0.5cm}
\end{figure}

\subsection{Related Works}
There have been several works which explain aerial vehicles using the concept of hybrid automata. In \cite{scholten2013interaction, darivianakis2014hybrid, Praveen2020inspection}, the contact task using an aerial vehicle is divided into two operative modes, i.e. docking and free-flight, and each mode are controlled by a different controller from one another. 
However, these works did not conduct an analysis on stability and robustness and the transition between those modes was not explicitly discussed. 

Some studies addressed the stability and robustness of an aerial vehicle involved in the physical interaction using the hybrid automata theory. In \cite{marconi2012control}, the stability of path-following control considering mode changes was investigated for a robust contact of a ducted-fan aerial vehicle on a vertical surface. In \cite{Cabecinhas2016robust}, the process of a multirotor landing on the slope was divided into several modes and the stability and robustness was proved in a similar way to \cite{marconi2012control}. However, in such settings, the effect of dynamic change can be reduced by slowly approaching the wall or landing site. Thus, there is no guarantee that such control methods can maintain the stability and robustness of the aerial manipulation involving an abrupt change such as plug-pulling. 

\subsection{Contributions}
To the best of the authors' knowledge, this is the first attempt to conduct a plug-pulling task using an aerial manipulator, which involves a significant mode change, and present a thorough analysis on the stability and robustness of the aerial manipulator using the hybrid automata theory. We propose a trajectory generation strategy and DOB control structure for each operative mode. Especially, for the situation of pulling the wire, we derive a dynamical model of the aerial manipulator constrained to the wire and the socket. In addition, we construct a DOB structure corresponding to the model of the plug-pulling aerial manipulator and prove the stability and robustness of the proposed controller.

\subsection{Outline}
In Section II, we briefly explain the concept of hybrid automata, describe notions utilized throughout the paper and introduce the aerial plug-pulling scenario. Section III formulates hybrid automata for the aerial manipulator conducting the plug task, and the trajectory generation and controller design is described in Section IV. Section V shows the stability and robustness analysis, and Section VI presents the experimental setup and results.

\section{PROBLEM SETUP}
\subsection{Preliminary: Hybrid Automata}\label{subsec:prelim}
The following elements define \textit{hybrid automata} \cite{goebel2009hybrid} with the state variable $x \in \mathbb{R}^{n_x}$ and the control input $u_x \in \mathbb{R}^{m_x}$.
\begin{itemize}
    \item \textit{Set of operative modes}, $\mathcal{M}$, contains names of the control modes. With respect to $\mathcal{M}$, we let $t_{0, \mu}$ and $t_{d,\mu}$ denote the time when the mode $\mu$ begins and the desired time to terminate the mode $\mu$, respectively. 
    \item  \textit{Domain mapping}, $\mathcal{D}: \mathcal{M} \rightrightarrows \mathbb{R}^{n_x} \times \mathbb{R}^{m_x}$,
     means the possible region where $x$ and $u_x$ can evolve while maintaining a specific mode $\mu$. It is expressed as $\mathcal{D}(\mu) = \mathcal{D}_{x}(\mu) \times \mathcal{D}_{u_x}(\mu)$.
    \item  \textit{Flow map}, $f$: $\mathcal{M} \times \mathbb{R}^{n_x} \times \mathbb{R}^{m_x} \rightarrow \mathbb{R}^{n_x/2}$, describes the dynamics in each operative mode $\mu$.
    \item \textit{Set of edges}, $\mathcal{E} \subset \mathcal{M} \times \mathcal{M}$, means all possible pairs of operative mode changes $(\mu_1, \mu_2)$.
    \item  \textit{Guard mapping}, $\mathcal{G}: \mathcal{E} \rightrightarrows \mathbb{R}^{n_x} \times \mathbb{R}^{m_x}$, describes the conditions where the transition from $\mu_1$ to $\mu_2$ occurs. It is represented as $\mathcal{G}(\{ \mu_1, \mu_2 \})$.
    \item \textit{Reset Map}, $\mathcal{R} : \mathcal{E} \times \mathbb{R}^{n_x} \times \mathbb{R}^{m_x} \rightarrow \mathbb{R}^{n_x}$, means the jump of the state variable $x$. It is expressed as $\mathcal{R}(\{ \mu_1,\mu_2 \},(x, u_{x}))$.
\end{itemize}

\subsection{Notations}
In this work, we use $0_{i j}$, $I_{i}$ and $e_3$ to denote the $i \times j$ zero matrix, the $i \times i$ identity matrix and $[0 \ 0 \ 1]^{\top}$. Also, we define $a_i$, $[a]$, dim$(a)$, $A_{i, j}$, $A_{i:j,k:l}$ and $\mathcal{B}_{\sigma}(a)$ as the $i$-th element of a column vector $a$, the $so(3)$ operator representing the cross product $[a]b = a \times b$, the dimension of $a$, the $(i,j)$-th element a matrix $A$, the block matrix of $A$ containing from $(i,k)$-th to $(j,l)$-th elements and the set $\{ c \in \mathbb{R}^{\textrm{dim}(a)} \ | \ \| c - a \| \leq \sigma \}$ where $\sigma$ is a constant positive number. The Kronecker product is expressed as $\otimes$.

As in Fig. \ref{fig: perching aerial manipulator with frames}, we denote the frame of the inertial coordinate, multirotor, 1$^{\textrm{st}}$, 2$^{\textrm{nd}}$ servo motor and the end-effector by $\{ I \}$, $\{ B \}$, $\{ \textit{1} \}$, $\{ \textit{2} \}$ and $\{ E \}$ respectively. 
\begin{figure}[t]
\centering
\includegraphics[width = 0.3\textwidth]{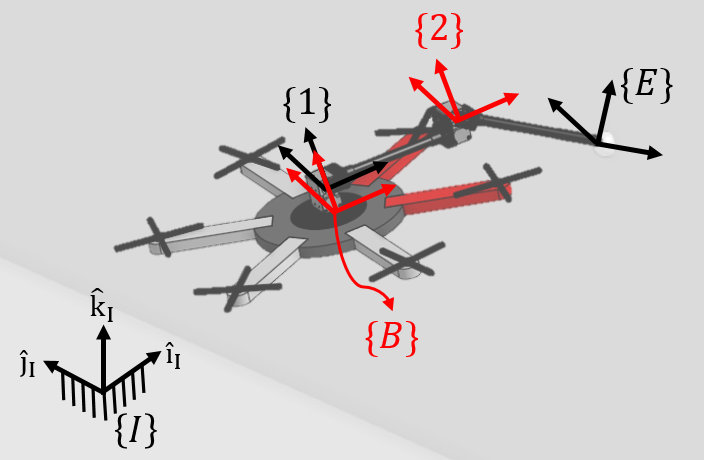}
\caption{The picture depicting an aerial manipulator perching on the wall connected by a spherical joint. The wall is aligned with $\hat{j}_I$-$\hat{k}_I$ plane, and the aerial manipulator pulls the plug out of the socket in -$\hat{i}_I$ direction.} \label{fig: perching aerial manipulator with frames}
\vspace{-0.5cm}
\end{figure}

To express the state of an aerial manipulator, we define the generalized coordinate $q$ as $[p_{I B}^{\top} \ \eta^{\top} \ \gamma^{\top}]^{\top}$ consisting of the position of the multirotor $p_{I B} \delequal [p_x \ p_y \ p_z]^{\top} \in \mathbb{R}^3$, Euler angles $\eta \delequal [\phi \ \theta \ \psi]^{\top} \in \mathbb{R}^3$ where $\phi$, $\theta$ and $\psi$ represent roll, pitch and yaw angles, and the angles of servo motors $\gamma \delequal [\gamma_1 \ \gamma_2]^{\top} \in \mathbb{R}^2$. Also, we let $\chi$, $r$, $x_q$ and $x_r$ denote $[p_{I B}^{\top} \ \eta^{\top}]^{\top}$, $[\eta^{\top} \ \gamma^{\top}]^{\top}$, $[q^{\top} \dot{q}^{\top}]^{\top}$ and $[r^{\top} \dot{r}^{\top}]^{\top}$. 
To represent inputs, we use $u_f$ and $u$ to denote $[T \ \tau_b^{\top} \  \tau_{\gamma}^{\top}]^{\top}$ and $[T \ \tau_b^{\top}]^{\top}$ where $T$, $\tau_b$ and $\tau_{\gamma}$ mean a total thrust, moments with respect to $\hat{i}_B$, $\hat{j}_B$ and $\hat{k}_B$ and torque inputs exerted on the servo motors. We let $Q \in \mathbb{R}^{3 \times 3}$ denote a matrix which satisfies $\omega^{B}_{I B} = Q\dot{\eta}$ and the scalar $g$  the gravitational acceleration. Moreover, we set $a_d$, $\hat{a}$ and $\bar{A}$ as a desired trajectory, an estimate of $a$ and 
the nominal value of $A$.

We use $m_b$, $m_1$ and $m_2$ to denote mass of the multirotor, the 1$^{\textrm{st}}$ and the 2$^{\textrm{nd}}$ servo motor while diagonal matrices $J_b$, $J_1$, $J_2$ in $\mathbb{R}^{3 \times 3}$ are the moments of inertia of the corresponding components. Additionally, $J_E \in \mathbb{R}^{3 \times 3}$ means the moment of inertia of the end-effector.  

\subsection{Scenario}
As in Fig. \ref{fig: perching aerial manipulator with frames}, the aerial manipulator tries to unplug in $-\hat{i}_I$ direction from the socket installed on the wall aligned with $\hat{j}_I$-$\hat{k}_I$ plane. After the plug is separated from the socket, the vehicle quickly stabilizes its attitude in a short time and maintains the hovering state.

\section{HYBRID AUTOMATA OF AERIAL PLUG-PULLING}
We construct the elements of the hybrid automata listed in Section \ref{subsec:prelim} for the aerial manipulator pulling the plug.
\subsection{Set of Operative Modes, $\mathcal{M}$ = $ \{$WP, ST, FF$\}$}
In $WP$ (\textit{wire-pulling}) mode, the aerial manipulator tries to unplug by pulling the wire in $-\hat{i}_I$ direction. In $ST$ (\textit{stabilizing}) mode, the vehicle quickly stabilizes its attitude immediately after the separation of the plug. In $FF$ (\textit{free-flight}) mode, the aerial manipulator returns to the original location and keeps the hovering state.

\subsection{Domain Mappings, $\mathcal{D}(WP)$, $\mathcal{D}(ST)$ and $\mathcal{D}(FF)$}
\begin{itemize}
    \item $\mathcal{D}(WP) \delequal
          \{ (x_r, u_f) \in  \mathbb{R}^{10} \times \mathbb{R}^6 \ | \ F_{E, 1} < F_{TH}\}$ where $F_{E}$ is {the interaction force} acting on the end-effector due to the friction between the plug and the socket and $F_{TH}$ is the force limit up to which the plug can resist from separating.
    \item $\mathcal{D}(FF \ or \ ST) \delequal \mathcal{V}_{F} - \mathcal{D}(WP)$ where $\mathcal{V}_{F}$ means an $\mathbb{R}^{16} \times \mathbb{R}^6$ space representing the flight envelope, i.e., all possible regions of the state and inputs for the flight experiment.
\end{itemize}

\subsection{Flow Maps, $f(WP,x_r,u_f)$ and $f(ST \ or \ FF,x_q,u_f)$} 
\subsubsection{$WP$ mode}
The derivation of $f(WP,x_r,u_f)$ is based on \cite{kim2013aerial}, but since the position of the end-effector is fixed, we newly derive it in the form of the Euler-Lagrange equation with $r = [\eta^{\top} \ \gamma^{\top}]^{\top}$ which fully describes the dynamics of the wire-pulling aerial manipulator.

First we express position and angular velocity as 
\begin{equation} \label{eq: WP - position and angular velocity values}
    \begin{split}
        p_{I B} &= p_{I E} -R_{I B}p_{B E}, \ p_{I1} = p_{I E} + R_{I B}p_{B1} - R_{I B}p_{B E}, \\
        p_{I2} &= p_{I E} + R_{I B}p_{B 2} - R_{I B}p_{B E}, \\
        \omega^{B}_{I B} &= Q \dot{\eta}, \quad \omega^{1}_{I 1} = R_{B 1}^{\top}(\omega^{B}_{I B} + \omega^{B}_{B1}),\\
        \omega^{2}_{I2} &= R_{b2}^{\top}(\omega^{b}_{I B} + \omega^{B}_{B2}), \quad \omega^{E}_{IE} = R_{B E}^{\top}(\omega^{B}_{I B} + \omega^{B}_{B E}).
    \end{split} 
\end{equation}
with the kinematic constraint $\dot{p}_{I E} = 0_{31}$
By substituting (\ref{eq: WP - position and angular velocity values}) into the derivation process presented in \cite{kim2013aerial}, an Euler-Lagrange equation of the wire-pulling aerial manipulator model can be derived. Then, the obtained equation of motion can be analyzed in the form of flow map as follows.
\begin{equation}\label{eq: dynamics_WP_full}
    \ddot{r} = f(WP,x_r,u_f) = M^{-1}_r(- C_r - K_r + J_{u r}^{T}u_{f} + \tau_{e,r})
\end{equation}
where $M_r \in \mathbb{R}^{5 \times 5}$, $C_r \in \mathbb{R}^{5}$, $K_r \in \mathbb{R}^{5}$,
\begin{equation*}
    J_{u r} \delequal \frac{\partial [\dot{p}^{B}_{I B} {}^{\top} \ {\omega^B_{I B}}^{\top} \ \dot{\gamma}^{\top} ]^{\top}}{\partial r} \in \mathbb{R}^{6 \times 5},
\end{equation*}
{and $\tau_{e, r}$ means the external disturbance applied to the wire-pulling aerial manipulator system.} Thus, the variable $x_r$ evolves in correspondence with (\ref{eq: dynamics_WP_full}).

\subsubsection{$ST$ and $FF$ modes}
The Euler-Lagrange equation for the multirotor equipped with the 2 DOF robotic arm is derived in \cite{kim2013aerial}. Then, the flow map $f(ST \ or \ FF,x_q,u_q)$ can be derived as follows.
\begin{multline} \label{eq:dynamics of FF_full}
    \ddot{q} = f(ST \ or \ FF,x_q,u_q) \\= M_q^{-1}(- C_q - K_q + J_{u q}^{T}u_f + \tau_{e,q})
\end{multline}
where $M_q \in \mathbb{R}^{8 \times 8}$, $C_q \in \mathbb{R}^{8}$, $K_q \in \mathbb{R}^{8}$, $J_{u q} \in \mathbb{R}^{6 \times 8}$, and $\tau_{e, q}$ means the external disturbance applied to the system in free flight.

\subsection{Set of Edges, $\mathcal{E} = \{ (WP,ST), (ST,FF) \}$}
There would be only two possible edges in our scenario because the transition from ST or FF mode to WP mode does not occur unless the plug is attached to the socket again. Also, a change from $FF$ to $ST$ mode is impossible because the $ST$ mode is primarily designed as an intermediate stage between the $WP$ and $FF$ modes.

\subsection{Guard Mappings, $\mathcal{G}(\{WP,ST\})$ and $\mathcal{G}(\{ST,FF\})$}
A transition from WP mode to ST mode occurs when the $\hat{i}_I$ element of $F_E$ exceeds $F_{TH}$. Therefore, the guard map $\mathcal{G}(\{WP,ST\})$ is defined as $\{ (x_r, u_f) \in \mathcal{D}(WP) \ | \ F_{E, 1} = F_{TH} \}$. Also, since ($\phi_d$, $\theta_d$) in the ST mode are user-defined values while they are computed from the user-defined values of ($x_d$, $y_d$) in the FF mode, an undesirable abrupt change in ($\phi_d$, $\theta_d$) would provoke a failure in attitude control. Therefore, the guard map $\mathcal{G}(\{ST,FF\})$ is defined as $\{(x_q,u_f) \in \mathcal{D}(ST) \ | \ \| \eta \| < \delta_{\eta}, \ t_{d, ST} \leq t \}$ where $\delta_{\eta}$ is defined as the threshold of $\eta$ for the mode change.

\subsection{Reset maps, $\mathcal{R}(\{ WP,ST \},(x_r, u_f))$ and $\mathcal{R}(\{ ST,FF \},(x_q, u_f))$}
If the operative mode changes from WP  to ST, there would be jumps in $x_q$ due to the sudden disappearance of the force exerted on the end-effector. However, since we cannot know the exact magnitude of the jumps in $x_q$, we denote it by $x_q$ which satisfies $\dot{p}_{I E}(x_q) \neq 0_{31}$. Then, the reset map $\mathcal{R}(\{ WP,ST \},(x_q, u_f))$ is expressed as $\{ x_q^{+} \in \mathcal{D}_{x_q^{+}}(ST) \ | \ \dot{p}_{I E}(x_q^{+}) \neq 0_{31} \ \textrm{where} \ x_r \in \mathcal{D}_{x_r}(WP) \}$. On the other hand, the change from ST  to FF does not entail any jump in $x_q$ because they evolve under the same dynamics. Therefore, the reset map $\mathcal{R}(\{ ST,FF \},(x_q, u_f))$ can be derived as $\{ x_q^{+} \in \mathcal{D}_{x_q^{+}}(FF) \ | \ x_q^{+} = x_q \ \textrm{where} \ x_q \in \mathcal{D}_{x_q}(ST) \}$.

\section{TRAJECTORY GENERATION AND CONTROLLER DESIGN}
\subsection{Trajectory Generation}
It is assumed that $\gamma$ and $\dot{\gamma}$ exactly follow $\gamma_d$ and $\dot{\gamma}_d$ respectively and the desired values that are not defined at each mode are set to be the same as the current values.
\subsubsection{WP mode}
In this mode, the aerial manipulator tries to {tilt its body with respect to $-\hat{j}_B$} in order to exercise a pulling force to the socket. Therefore, $\eta_d(t)$ is given as below,
\begin{equation}\label{eq: trajectory (WP)}
    \eta_d(t) =
    \begin{cases}
        [0 \ -\theta_{m}(\frac{t-t_{0,WP}}{t_{d, WP} - t_{0, WP}}) \ 0 ]^{\top}, & t_{0, WP} \leq t < t_{d, WP} \\
        0_{31}, & t_{d, WP} \leq t
    \end{cases}
\end{equation}
where $\theta_m$ means the maximum absolute value of the pitch angle. It prevents a sudden transition to $ST$ mode by gradually {tilting the vehicle's body}.
\subsubsection{ST mode}
This mode is proposed for compensating the overshoot invoked by the transition of the dynamical model and avoiding an abrupt change in $\phi_d(t)$ and $\theta_d(t)$. In order to simultaneously minimize the overshoot and make $\phi$ and $\theta$ close to zero, the time interval $[t_{0, ST}, t_{d, ST})$ needs to be reasonably small. Therefore, $p_{z, d}(t)$ and $\eta_d(t)$ are set as
\begin{equation*}
    p_{z, d}(t) = p_z(t_{0,ST})
\end{equation*}
\begin{equation}\label{eq: trajectory (ST)}
    \eta_d(t) =
    \begin{cases}
    c_2 t^2 + c_1 t + c_0,\ &t_{0, ST} \leq t < t_{d, ST}\\
    0_{31},\ & t_{d, ST} \leq t
    \end{cases}
\end{equation}
where coefficients $c_0$, $c_1$ and $c_2$ satisfy the conditions $\eta_d(t_{0, ST}) = \eta(t_{0, ST})$, $\dot{\eta}_d(t_{0, ST}) = \dot{\eta}(t_{0, ST})$ and $\eta_d(t_{d, ST}) = 0_{31}$.
\subsubsection{FF mode} 
In FF mode, $p_{I b}$ and $\psi$ are set to fly back to the original position as follows:
\begin{equation} \label{eq: trajectory (FF)}
    p_{I b, d}(t) = p_{I b}(t_{0, WP}), \ \psi_d(t) = \psi(t_{0, WP})
\end{equation}
\subsection{Nominal Model for Each Mode}
Servo motors are usually controlled by the given desired position, not torque. Therefore, the equations of motion that eliminate the term $\tau_{\gamma}$ are derived for each model.
\subsubsection{WP mode}
In (\ref{eq: dynamics_WP_full}), $J_{u r}$ is computed as
\begin{equation}
    J_{u r}^{\top} = \begin{bmatrix} J_{T \eta}^{\top} & Q^{\top} & 0_{3 2} \\ J_{T \gamma}^{\top} & 0_{2 3} & I_2 \end{bmatrix}, \ J_{T_{\eta}} \in \mathbb{R}^{1 \times 3}, \ J_{T_{\gamma}} \in \mathbb{R}^{1 \times 2}.
\end{equation}
Therefore, the model with respect to $\eta$ is obtained with {known values $T$, $\gamma$, $\dot{\gamma}$ and an observable value $\ddot{\gamma}$} as follows.
\begin{equation} \label{eq: actual model of WP mode}
    \ddot{\eta} = F_{\eta} + G_{\eta}\tau_b
\end{equation}
where
\begin{equation*}
\begin{split}
    F_{\eta} &\delequal M^{-1}_{\eta}\{ -C_{\eta} - K_{\eta} - M_{\eta \gamma}\ddot{\gamma} - J^{\top}_{T_{\eta}}T - \tau_{e,r} \} \\
    G_{\eta} &\delequal M^{-1}_{\eta}Q^{\top}
\end{split}
\end{equation*}
with block matrices $M_{\eta} = M_{r,1:3,1:3}$, $M_{\eta \gamma} = M_{r,1:3,4:5}$, $C_{\eta} = C_{r, 1:3,1}$ and $G_{\eta} = G_{r, 1:3, 1}$.
Then based on this, the nominal model for the WP mode can be obtained as follows.
\begin{equation} \label{eq: nominal dynamics of WP mode}
    \ddot{\eta} = \bar{F}_{\eta} + \bar{G}_{\eta}\tau_{b 0}
\end{equation}
where
\begin{equation} \label{eq: definition of F_eta and G_eta}
    \begin{split}
        \bar{F}_{\eta} &\delequal \bar{M}^{-1}_{\eta}\{ -\bar{C}_{\eta} - \bar{K}_{\eta} - \bar{M}_{\eta \gamma}\ddot{\hat{\gamma}} - \bar{J}^{\top}_{T_{\eta}}T \} \\
    \bar{G}_{\eta} &\delequal \bar{J}_b^{-1} Q^{\top}
    \end{split}
\end{equation}
with the nominal input $u_0 = [T \ \tau_{b 0}^{\top}]^{\top}$. The total thrust $T$ is calculated in the DOB controller introduced in \cite{lee2020aerial_2}.

\subsubsection{ST and FF mode}
The nominal model for ST and FF mode is derived in \cite{lee2020aerial_2} as below.
\begin{equation} \label{eq: actual and nominal model for FF}
        \ddot{q}_u = \bar{G}_u \Phi_0, \qquad \ddot{q}_f = \bar{F}_f + \bar{G}_f u_0
\end{equation}
where $q_u$ and $q_f$ mean the center of mass of $[p_x \ p_y]^{\top}$ and $[p_z \ \eta^{\top}]^{\top}$ respectively. The other notations are defined in \cite{lee2020aerial_2}.

\subsection{Controller Design}
\subsubsection{WP mode}
If we design the nominal input $\tau_{b 0}$ to make the solution of (\ref{eq: nominal dynamics of WP mode}) adequately follows $\eta_d$, the compensation of model discrepancy,  $\Delta_{\eta} \delequal (F_{\eta} - \bar{F}_{\eta}) + (G_{\eta} \tau_b - \bar{G}_{\eta} \tau_{b 0})$, is conducted by the DOB structure presented in \cite{back2009inner}. The overall diagram is shown in Fig. {\ref{fig: WPmode_DOB_structure}} and the detailed DOB control law is formulated as below.
\begin{figure}[t]
\centering
\includegraphics[width = 0.42\textwidth]{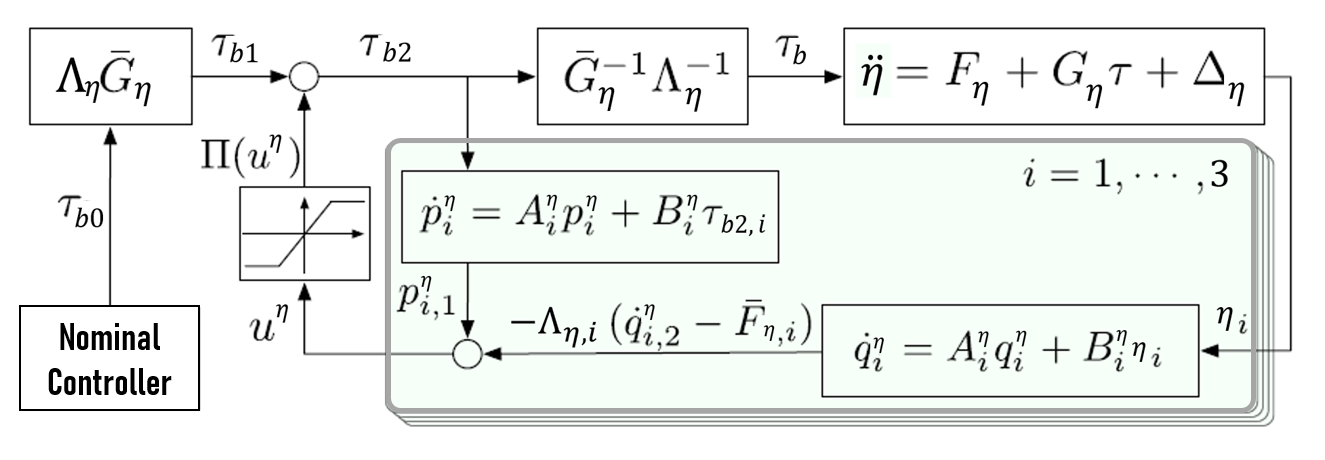}
\caption{The DOB structure of WP mode for compensating the model discrepancy $\Delta_{\eta}$ between (\ref{eq: actual model of WP mode}) and (\ref{eq: nominal dynamics of WP mode}).} \label{fig: WPmode_DOB_structure}
\vspace{-0.5cm}
\end{figure}
\begin{equation} \label{nominal controller for wire-pulling mode}
\begin{split}
    \dot{q}^{\eta}_{i} &= A_{\eta, i}q^{\eta}_{i} + B_{\eta, i}\eta_{i}, \quad \dot{p}^{\eta}_{i} = A_{\eta, i}p^{\eta}_{i} + B_{\eta, i}\tau_{b2, i} \\
    u^{\eta}_i &= p^{\eta}_{i, 1} - \sum_{j = 1}^{3}\Lambda_{\eta, i, j}(\dot{q}^{\eta}_{j, 2}-\bar{F}_{\eta, j}), \quad \tau_{b1} = \Lambda_{\eta}\bar{G}_{\eta}\tau_{b0} \\ 
    \tau_{b2} &= \Lambda_{\eta}\bar{G}_{\eta}\tau_{b 0} + \Pi_{\eta}(u^{\eta}), \ \tau_b = \tau_{b 0} + (\Lambda_{\eta}\bar{G}_{\eta})^{-1}\Pi_{\eta}(u^{\eta})
\end{split}
\end{equation}
where $q^{\eta} = [q^{\eta}_{1,1} \ q^{\eta}_{1,2} \ ...\ q^{\eta}_{3,2}]^{\top}$, $p^{\eta} = [p^{\eta}_{1,1} \ p^{\eta}_{1,2} \ ...\ p^{\eta}_{3,2}]^{\top}$, $u^{\eta} = [u^{\eta}_1 \ u^{\eta}_2 \ u^{\eta}_3]^{\top}$ and
\begin{equation*}
\begin{split}
    A^{\eta}_i &= 
    \begin{bmatrix}
        0 & 1 \\
        -a^{\eta}_{i, 0}/{\epsilon}^2_{\eta} & -a^{\eta}_{i, 1}/\epsilon_{\eta}
    \end{bmatrix}, \quad B^{\eta}_i = 
    \begin{bmatrix}
         0 \\
        a^{\eta}_{i, 0}/\epsilon_{\eta}^2
    \end{bmatrix}\\
\Lambda_{\eta} &= \bar{J}_b^{(1/2)} Q \in \mathbb{R}^{3 \times 3}
\end{split}
\end{equation*}
with the positive constants $a^{\eta}_{i, 0}$, $a^{\eta}_{i, 1}$ and the small positive constant $\epsilon_{\eta}$. In this DOB structure, {we use a saturation function $\Pi_{\eta}$ defined with the conditions below.}
\begin{itemize}
    \item $\Pi_{\eta}$: $\mathbb{R}^3 \Rightarrow \mathbb{R}^3$ is a globally bounded $\mathcal{C}^1$ function.
    \item \label{equality condition for saturation function} $\Pi_{\eta}(u^{\eta}) = u^{\eta}$ for $\forall u^{\eta} \in S_{u^{\eta}}$ where $S_{u^{\eta}} \delequal \{ u^{\eta} \in \mathbb{R}^{3 \times 1} \ | \ u^{\eta} = \Lambda_{\eta}\bar{G}_{\eta}G_{\eta}^{-1}(\bar{F}_{\eta} - F_{\eta} + (\bar{G}_{\eta} - G_{\eta})\tau_{b 0} - \Delta_{\eta}) \}$.
    \item $\| \partial \Pi_{\eta}(u^{\eta}) / \partial u^{\eta} \| \leq 1$ for $\forall u^{\eta} \in \mathbb{R}^{3 \times 1}$
\end{itemize}
From the conditions above, the quasi-steady state range of $u^{\eta}$ satisfies the equation $u^{\eta}_i = p^{\eta}_{i, 1} - \sum_{j = 1}^{3}\Lambda_{\eta, i, j}(\dot{q}^{\eta}_{j, 2}-\bar{F}_{\eta, j})$ while avoiding the saturation \cite{back2009inner}.

\subsubsection{ST mode}
For the ST mode, we only apply the DOB structure for the fully-actuated system introduced in \cite{kim2017robust}.

\subsubsection{FF mode}
We will utilize the same controller presented in \cite{lee2020aerial_2} for the FF mode.

\section{STABILITY AND ROBUSTNESS ANALYSIS}
In this section, an analysis of the stability and robustness will be presented. During the analysis, the term \textit{maneuver} which means $[x_{q}^{\top} u_f^{\top}]^{\top}$ in a particular mode will be used. $v_{\mu}$ and $\bar{v}_{\mu}^N$ mean the solution from the actual flow map and the nominal flow map. Additionally, tr$v_{\mu}$ is defined as a set including all values of $v_{\mu}$ in the given time interval.

\subsection{Preliminary Definitions for the Analysis on Hybrid Automata}
\begin{definition}[$\sigma$-robust $\mu_1$-single maneuver in $[t_{0, \mu_1}, t_1$)]\label{def: robust single maneuver} 
For $\sigma > 0$, $\mu_1 \in \mathcal{M}$ and $0 < t_1$, a maneuver $\bar{v}^N_{\mu_1} \in [t_{0, \mu_1}, t_1)_{\mu_1}$ satisfies
\begin{equation}
    \textrm{tr} \ \bar{v}^N_{\mu_1} \bigcap \big( \bigcup_{\{  \mu_1,  \mu' \} \in \mathcal{E}} \mathcal{G}(\{ \mu_1, \mu' \}) + \mathcal{B}_{\sigma} \big) = \varnothing.
\end{equation}  
\end{definition} 
Here, $[t_{0, \mu}, t_1)_{\mu}$ means the time interval where $\bar{v}^N_{WP}$ evolves in the mode $\mu$ within $[t_{0, \mu}, t_1)$.

\begin{definition}[$\sigma$-robust $\mu_{1} \mapsto \mu_{2}$ approach maneuver in $[t_{0, \mu_1},t_1 $)]\label{def: robust approach maneuver} 
For $\sigma > 0$, $\mu_1, \mu_2 \in \mathcal{M}$ and $t_{0, \mu_1} \leq t_1$, a maneuver $\bar{v}^N_{\mu_1} \in [t_{0, \mu_1}, T]_{\mu_1}$ satisfies
\begin{itemize}
    \item $\textrm{tr} \ \bar{v}^N_{\mu_1} \bigcap \big( \bigcup_{\{ \mu_1, \mu_2' \} \in \mathcal{E}-\{ \mu_1, \mu_2 \}} \mathcal{G}(\{ \mu_1, \mu_2' \}) + \mathcal{B}_{\sigma} \big) = \varnothing$ and $\mathcal{B}_{\sigma}(\bar{v}^N_{\mu_1}) \subset \mathcal{G}(\{ \mu_1, \mu_2 \})$
    \item Let $S^{\mu_1}$ and $S^{\mu_1 \rightarrow \mu_2}$ be compact sets defined as $S^{\mu_1} \delequal (\textrm{tr} \bar{v}^N_{\mu_1} + \mathcal{B}_{\sigma}) \cap \mathcal{G}(\{ \mu_1, \mu_2 \})$ and
    \begin{multline}
        S^{\mu_1 \rightarrow \mu_2} \delequal \{x_q^{+} \in \mathcal{D}_{x_q}(\mu_2) \ | \ x_q^{+} \in \\ \mathcal{R}(\{ \mu_1, \mu_2 \}, (x_q,u_f) \ \textrm{for some} \ (x_q,u_f) \in S^{\mu_1} \}
    \end{multline}
    Then, for any $x_q^{+} \in S^{\mu_1 \rightarrow \mu_2}$ there exists $u_f^{+}$ such that 
    \begin{equation}
        (x_q^{+}, u_f^{+}) \notin \bigcup_{\{ \mu_2, \mu_2' \} \in \mathcal{E}} \mathcal{G}(\{ \mu_2, \mu_2' \}) + \mathcal{B}_{\sigma}.
    \end{equation} 
\end{itemize}
\end{definition}

\begin{definition}[$(\sigma,\delta_{\sigma})$-robust $S^{\mu_1 \mapsto \mu_2}$ coverage set, $\mathcal{C}^{\mu_1 \mapsto \mu_2}_{\delta_\sigma}$] \label{def: C_delta_sigma}
A set of $N_{\delta_{\sigma}}$ elements $x_{q, 1},...,x_{q, N_{\delta_{\sigma}}} \in \mathcal{D}_{x_q}(\mu_2)$ which satisfies
\begin{equation}
    S^{\mu_1 \mapsto \mu_2} \subset \bigcup_{j \in \{ 0, ..., N_{\delta_{\sigma}} \}} \mathcal{B}_{\delta_{\sigma}}(x_{q, j}).
\end{equation}
\end{definition}

\begin{definition} [$(\sigma,\delta_{\sigma})$-robust $\mu_{1} \mapsto \mu_{2}$ transition maneuver] \label{def: definition of robust transition}
For $\sigma > 0$, $\delta_{\sigma} > 0$, $\mu_1$, $\mu_2 \in \mathcal{M}$ and $t_{0, \mu_1} < t_1$, $\bar{v}^N_{\mu_1}$ which is a union of a $\sigma$-robust $\mu_{1} \mapsto \mu_{2}$ approach maneuver before the switching time $T \in (t_{0, \mu_1}, t_1)$ and of a set of $N_{\delta_{\sigma}}$ $\sigma$-robust single maneuvers after transition with the property $v_{\mu_2, j} \in \mathcal{C}^{\mu_1 \mapsto \mu_2}_{\delta_\sigma}, \forall j \in \{ 1, ..., N_{\delta_{\sigma}}\}$. 
\end{definition}

\subsection{Robustness of the Nominal maneuver}
With the assumption that $x_q$ of the nominal maneuver $\bar{v}^N_{\mu}$ adequately follows {$x_{q, d}$} defined through (\ref{eq: trajectory (WP)}) -- (\ref{eq: trajectory (FF)}), we analyze the characteristics of the nominal maneuver. 
\subsubsection{$(\sigma, \delta_{\sigma})$-robust $WP \mapsto ST$ transition maneuver}
With the assumption that the aerial manipulator is in a quasi-equilibrium state while perching on the wall, the pulling force $F_{E, 1}$ is equal to $-T\sin \theta$ \cite{kim2015operating}. Thus, $F_{E, 1}$ increases with the gradually decreasing trajectory of $\theta$ as generated in (\ref{eq: trajectory (WP)}). As a result, $\bar{v}^N_{WP}$ does not reach $\mathcal{G}\{ (WP, ST) \} + \mathcal{B}_{\sigma}$ before $F_{E, 1}$ closely approaches $F_{TH}$. Moreover, as depicted in Fig. \ref{fig:diagram_for_WP_ST_1}, there is no possibility of a transition from $WP$ to $FF$ mode because $\mathcal{G}\{ (WP, FF) \}$ is defined as $\varnothing$. Therefore, the maneuver $\bar{v}^N_{WP}$ is proved to be a $\sigma$-robust $WP$-single maneuver in $[t_{0, WP}, t_{0, ST})$. 

$\bar{v}^N_{WP}$ can also become a $\sigma$-robust $WP \mapsto ST$ approach maneuver when there exists a time instant that a change from $WP$ to $ST$ occurs in a finite time. Thanks to the relation that $F_{E,1}$ equals to $-T\sin\theta$, we can easily find a sufficient condition $0 < F_{TH} < T_{m}\sin \theta_m$ where $T_{m}$ means the maximum value of $T$. The inequality above infers that $F_{E, 1}$ can reach $F_{T H}$ with the given trajectory of $\theta$. {Additionally, since the direct change from $WP$ to $FF$ mode never occurs as mentioned above}, the claim that $\bar{v}^N_{WP}$ is a robust approach maneuver is proved.

After the transition from $WP$ to $ST$ mode, the reset map of $\bar{v}^N_{WP}$ is uncertain. However, we can avoid the situation where the value of $\mathcal{R}(\{ (WP, ST) \}, (x_r(t_{0, ST}), u_f(t_{0, ST})))$ becomes an element of $\mathcal{D}(FF)$ the switch from $ST$ to $FF$ does not occur before the time reaches the value of $t_{d, ST}$ by the condition $t_{d, ST} < t$ in $\mathcal{G}\{ (ST, FF) \}$. Therefore, a maneuver $\bar{v}^N_{WP}$ turns out to be a $\sigma$-robust $ST$-single maneuver in $[t_{0, WP}, t_{0, ST})$ after the mode transition. From the analyses above, the union of tr$\bar{v}^N_{W P}$ and tr$\bar{v}^N_{ST}$ is proved to be a $(\sigma, \delta_{\sigma})$-robust $WP \mapsto ST$ transition maneuver.
\begin{figure}[t] \centering
\begin{subfigure}{0.45\textwidth} \centering
\includegraphics[width=0.55\linewidth]{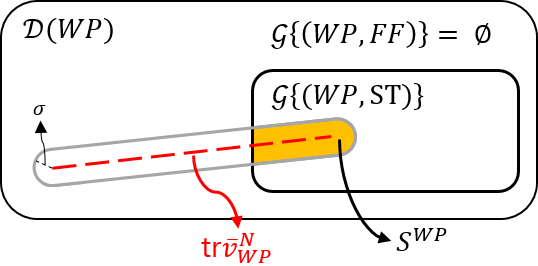} 
\caption{$\sigma$-robust $WP \mapsto ST$ approach maneuver}
\label{fig:diagram_for_WP_ST_1}
\end{subfigure}
\begin{subfigure}{0.45\textwidth} \centering
\includegraphics[width=0.7\linewidth]{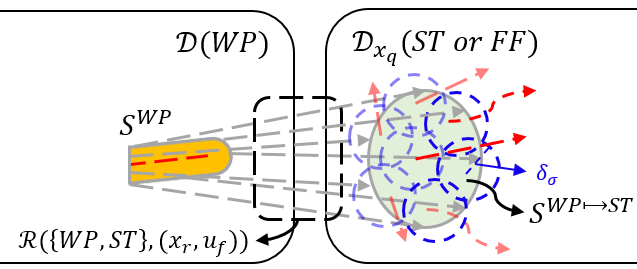}
\caption{$(\sigma, \delta_{\sigma})$-robust $WP \mapsto ST$ transition maneuver.}
\label{fig:diagram_for_WP_ST_2}
\end{subfigure}
\caption{Venn diagram demonstrating hybrid automata in $WP$ and $ST$ modes. (a) A red dashed line means the trace of the nominal maneuver in $WP$ mode. A yellow region describes the set of possible maneuvers which can provoke a change from $WP$ to $ST$ mode. (b) A green region expresses the set of reset maps from the maneuvers initiating from the yellow region. Blue dashed circles are centered in $N_{\delta_{\sigma}}$ elements in $C^{WP \mapsto ST}_{\delta_{\sigma}}$ and have the radius $\delta_{\sigma}$. Red dashed arrows describe $N_{\delta_{\sigma}}$ maneuvers starting from $x_{q, j}$ where $j = 1, ..., N_{\delta_{\sigma}}$. Following Definition \ref{def: C_delta_sigma}, all the blue dashed circles cover the green region.}
\label{fig:diagram_for_WP_ST}
\vspace{-0.5cm}
\end{figure}

\subsubsection{$(\sigma, \delta_{\sigma})$-robust $ST \mapsto FF$ transition maneuver}
As proved in the previous section, the maneuver $\bar{v}^N_{ST}$ is a $\sigma$-robust $ST$-single maneuver in $[t_{0, ST}, t_{0, FF})$. Also, there must be a change from $ST$ to $FF$ mode in a finite time since $\eta_d$ defined in $ST$ mode reaches $0_{31}$ when $t$ reaches $t_{d, ST}$. Also, since the transition from ST to WP mode is impossible, $\bar{v}^N_{ST}$ also turns out to be a $\sigma$-robust $ST \mapsto FF$ approach maneuver in $[t_{0, ST}, t_{0, FF})$.

Accordingly, the property that $(FF, ST)$ and $(FF, WP)$ are not the elements of $\mathcal{E}$ also guarantees that a maneuver $\bar{v}^N_{FF}$ is a $\sigma$-robust $FF$-single maneuver in $[t_{0, FF}, t_{d, FF})$. Thus, it is proved   that the union of tr$\bar{v}^N_{ST}$ and tr$\bar{v}^N_{FF}$ becomes a $(\sigma, \delta_{\sigma})$-robust $ST \mapsto FF$ transition maneuver.

\subsection{Analysis on Stability and Robustness at Each Mode}
\subsubsection{Wire-pulling mode}
Prior to formulating a theorem about the stability and robustness of the $WP$ mode, there need some remarks and assumptions as below.
\begin{remark}
$\eta_d$, $\dot{\eta}_d$, $\ddot{\eta}_d$ are continuous and bounded in $\mathcal{C}^2$. In addition, let $\eta_d \in S_{\eta_d}$ where $S_{\eta_d}$ is a known compact set. 
\end{remark}
\begin{assumption}
Let $\bar{\eta}^N(t)$ be the nominal solution of (\ref{eq: nominal dynamics of WP mode}). For given $\eta_d$, the solution $\bar{\eta}^N(t)$ evolves in a bounded set $\mathcal{U}_{\eta}$ if the initial condition $\bar{\eta}^N(0)$ is in a compact set $S_{\eta} \subset \mathcal{U}_{\eta}$, and $\bar{\eta}^N(t) - \eta_d(t)$ initiated in $S_{\eta}$ is locally asymptotically stable.
\end{assumption}
\begin{assumption} \label{assumption on disturbance in wire-pulling mode}
According to \cite{kim2017robust}, the aerodynamic effects, such as drag or buoyancy forces, are negligible in a near hovering condition due to the small size of the multirotors. Also, frictional torque and force applied on the end-effector are at least $\mathcal{C}^2$ and bounded in $\mathcal{U}_{\eta}$ because they are functions of $r$ and $\dot{r}$.
\end{assumption}
\begin{remark} \label{remark: C^2 and boundedness of F,G}
The terms $M_r$, $\bar{M}_{r}$, $M_{\eta \gamma}\ddot{\gamma}$, $\bar{M}_{\eta \gamma}\ddot{\hat{\gamma}}$, $C_r$, $\bar{C}_{r}$, $G_r$, $\bar{G}_{r}$, $J_{u r}$ and $\bar{J}_{u r}$ are vectors and matrices which consist of $r$, $\dot{r}$ and $\ddot{r}$. Thus, these terms are at least $\mathcal{C}^2$ and bounded in $\mathcal{U}_{\eta}$. Moreover, by the assumption \ref{assumption on disturbance in wire-pulling mode}, $\tau_{e,r}$  is also a vector that is $\mathcal{C}^2$ and bounded in $\mathcal{U}_{\eta}$. Since $F_{\eta}$, $\bar{F}_{\eta}$, $G_{\eta}$ and $\bar{G}_{\eta}$ have the terms described above, vectors and matrices shown in (\ref{eq: nominal dynamics of WP mode}) are all at least $\mathcal{C}^2$ and bounded in $\mathcal{U}_{\eta}$.
\end{remark}
Then finally, from the assumptions and remarks stated above, we can formulate a theorem on the relationship similar to the theorem introduced in \cite{lee2020aerial_2} and prove it.
\begin{theorem} \label{thm: stability analysis on wire-pulling mode}
Let $S^{\eta}_{q p}$ be a compact set for the initial condition $[q^{\eta}(0)^{\top} \ p^{\eta}(0)^{\top}]^{\top}$, and $\bar{S}_{\eta}$ be a compact set smaller than $S_{\eta}$. For a given $\sigma > 0$, there exists $\epsilon^{*}_{\eta}$ such that, for each $0 < \epsilon_{\eta} < \epsilon^{*}_{\eta}$, the solution of (\ref{eq: nominal dynamics of WP mode}), $[\bar{\eta}^N(t)^{\top} \dot{\eta}^{N}(t)^{\top} \bar{\tau}^N_{b 0}(t)^{\top}]^{\top}$, and that of (\ref{eq: actual model of WP mode}), $[\eta(t)^{\top} \dot{\eta}^{N}(t)^{\top} \tau_{b 0}(t)^{\top}]^{\top}$, initiated at $[\eta(0)^{\top} \dot{\eta}(t)^{\top} \tau_{b 0}(t)^{\top} q^{\eta}(0)^{\top} p^{\eta}(0)^{\top}]^{\top} \in \bar{S}_{\eta} \times S^{\eta}_{q p}$ satisfies
\begin{multline*}
    \| [\eta(t)^{\top} \dot{\eta}(t)^{\top} \tau_{b 0}(t)^{\top} ] \\- [\bar{\eta}^{N}(t)^{\top} \dot{\eta}^{N}(t)^{\top} \bar{\tau}^{N}_{b 0}(t)^{\top} ] \| \leq \sigma, \ \forall t \geq 0
\end{multline*}
if we set the initial conditions for both dynamic models identical, $[\bar{\eta}^N(0)^{\top} \dot{\eta}^{N}(0)^{\top} \bar{\tau}^N_{b 0}(0)^{\top}]^{\top} = [\eta(0)^{\top} \dot{\eta}(0)^{\top} \tau_{b 0}(0)^{\top}]^{\top}$.
\end{theorem}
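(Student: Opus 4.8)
The plan is to read Theorem~\ref{thm: stability analysis on wire-pulling mode} as a specialization of the disturbance-observer closeness result of \cite{back2009inner} to the attitude subsystem (\ref{eq: actual model of WP mode})--(\ref{eq: nominal dynamics of WP mode}) driven by the DOB (\ref{nominal controller for wire-pulling mode}), and to prove it by a two-time-scale (singular perturbation) argument with $\epsilon_\eta$ as the perturbation parameter. First I would stack the closed loop: the physical states $(\eta,\dot\eta)$ together with the nominal controller state form the slow part, while the observer states $q^\eta,p^\eta$ --- which through $A^\eta_i,B^\eta_i$ carry the factors $1/\epsilon_\eta$ and $1/\epsilon_\eta^2$ --- are placed in fast coordinates by rescaling with $\epsilon_\eta$. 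Writing this stacked system alongside the nominal model (\ref{eq: nominal dynamics of WP mode}) and introducing the error $z \delequal [\eta^\top\ \dot\eta^\top\ \tau_{b0}^\top]^\top - [(\bar\eta^N)^\top\ (\dot{\bar\eta}^N)^\top\ (\bar\tau^N_{b0})^\top]^\top$, the claim reduces to proving $\|z(t)\|\le\sigma$ for all $t\ge 0$ when $z(0)=0$.

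Second, I would analyze the boundary layer. In the stretched time $\tau=t/\epsilon_\eta$ with the slow variables frozen, the fast subsystem is governed by the matrices $A^\eta_i$, which are Hurwitz for the positive gains $a^\eta_{i,0},a^\eta_{i,1}$, so it is exponentially stable uniformly over the (bounded) slow variables. The saturation $\Pi_\eta$ is the crucial device here: being globally bounded, $\mathcal C^1$ and $1$-Lipschitz, it prevents the $O(1/\epsilon_\eta)$ peaking of $q^\eta,p^\eta$ from propagating into $\tau_b=\tau_{b0}+(\Lambda_\eta\bar G_\eta)^{-1}\Pi_\eta(u^\eta)$, so that over the $O(\epsilon_\eta)$ fast transient the applied input, and hence $(\eta,\dot\eta)$, cannot leave $\mathcal U_\eta$. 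Inside $\mathcal U_\eta$, Remark~\ref{remark: C^2 and boundedness of F,G} --- together with Assumption~\ref{assumption on disturbance in wire-pulling mode} for $\tau_{e,r}$ --- supplies $\mathcal C^2$-regularity and uniform bounds on $F_\eta,\bar F_\eta,G_\eta,\bar G_\eta$, which yield the Lipschitz and interconnection estimates needed in the next step.

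Third comes the reduced system. On the quasi-steady-state manifold the observer output equals $u^\eta_i=p^\eta_{i,1}-\sum_{j=1}^{3}\Lambda_{\eta,i,j}(\dot q^\eta_{j,2}-\bar F_{\eta,j})$; since the corresponding values lie in $S_{u^\eta}$, where $\Pi_\eta$ acts as the identity, the injected correction rejects the model discrepancy $\Delta_\eta$ and the slow dynamics of (\ref{eq: actual model of WP mode}) collapse onto the nominal dynamics (\ref{eq: nominal dynamics of WP mode}). By the standing assumption on the nominal solution, the nominal tracking error $\bar\eta^N-\eta_d$ initiated in $S_\eta$ is locally asymptotically stable and $\bar\eta^N$ remains in $\mathcal U_\eta$, so the reduced error dynamics admit a Lyapunov function $V_s(z)$. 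I would then form the composite Lyapunov function $V=V_s(z)+\epsilon_\eta V_f$, where $V_f$ is the boundary-layer Lyapunov function, bound $\dot V$ using the exponential decay of the fast part and the interconnection estimates above, and conclude $\dot V<0$ outside an $O(\epsilon_\eta)$ ball. Fixing $\bar S_\eta\subset S_\eta$ and $S^\eta_{qp}$ to pin down the region of attraction, there is $\epsilon^*_\eta>0$ such that for every $0<\epsilon_\eta<\epsilon^*_\eta$ a sublevel set of $V$ is invariant, contained in $\mathcal U_\eta$, and small enough to force $\|z(t)\|\le\sigma$; the matched initial condition $z(0)=0$ places the trajectory in this set at $t=0$, which gives the bound for all $t\ge 0$.

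The main obstacle I anticipate is controlling the peaking transient and the infinite horizon simultaneously. One must establish, with an $\epsilon_\eta$-uniform finite-time estimate, that before the observer states settle the saturation keeps $(\eta,\dot\eta,\tau_{b0})$ inside the region where the standing assumption, Assumption~\ref{assumption on disturbance in wire-pulling mode} and Remark~\ref{remark: C^2 and boundedness of F,G} all remain valid, and only then invoke asymptotic stability of the nominal error on the slow manifold to upgrade a Tikhonov-type finite-time closeness bound to a uniform-in-time bound. A minor but indispensable preliminary is checking that the nominal trajectory $[(\bar\eta^N)^\top\ (\dot{\bar\eta}^N)^\top\ (\bar\tau^N_{b0})^\top]^\top$ is itself bounded --- so that $z$ and $S_{u^\eta}$ are well defined along the motion --- which follows from the standing assumption on the nominal solution and the boundedness asserted in Remark~\ref{remark: C^2 and boundedness of F,G}.
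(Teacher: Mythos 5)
Your overall route --- casting the DOB loop as a singularly perturbed system with the observer states as fast variables, using the global boundedness of $\Pi_{\eta}$ to suppress peaking, showing that on the quasi-steady-state manifold the slow dynamics collapse onto the nominal model (\ref{eq: nominal dynamics of WP mode}), and then upgrading a Tikhonov-type finite-time bound to a uniform-in-time bound via the asymptotic stability of the nominal error --- is the same architecture the paper uses (its proof is delegated to \cite{kim2017robust} and then fleshed out through Lemma \ref{lemma for standard singular form}, the quasi-steady-state computation (\ref{eq: quasi-states of xi and zeta})--(\ref{eq: solution for ustar}), Lemma \ref{lemma for the sector condition of Gamma}, Lemma \ref{lemma: equation for exponential stability of fast dynamics}, and the infinite-interval Tikhonov theorem of \cite[p.456]{khalil2002nonlinear}).

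However, there is a genuine gap in your boundary-layer step. You claim the fast subsystem is ``governed by the matrices $A^{\eta}_i$, which are Hurwitz,'' so that exponential stability is automatic. In the closed loop this is not true: after passing to the error coordinates $\tilde{\xi}^{\eta},\tilde{\zeta}^{\eta}$ (cf.\ (\ref{eq: error dynamics of xi and zeta})), the $\tilde{\zeta}^{\eta}$-dynamics are driven by the blocks $A^{\eta}_{\tilde{\zeta},i}=\bigl[\begin{smallmatrix}0 & 1\\ 0 & -a^{\eta}_{i,1}\end{smallmatrix}\bigr]$, which have a zero eigenvalue, together with the nonlinear feedback term $-B^{\eta}_2 a^{\eta}_0\Gamma_{\eta}(\tilde{\zeta}^{\eta}_{[1]})$ in which the plant--model mismatch $G_{\eta}\bar{G}_{\eta}^{-1}$ enters through $\Pi_{\eta}$. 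Exponential stability of the boundary layer, and equally the uniqueness of the quasi-steady state $u^{\eta*}$ solving the fixed-point equation (\ref{eq: equation for ustar}), both hinge on the sector condition of Lemma \ref{lemma for the sector condition of Gamma}, i.e.\ on the small-gain-type inequality $\| I_3-\Lambda_{\eta}G_{\eta}\bar{G}_{\eta}^{-1}\Lambda_{\eta}^{-1}\| = \| I_3-\bar{J}_b^{1/2}M_{\eta}^{*-1}\bar{J}_b^{1/2}\| <1$, which the paper obtains from the structural identity (\ref{eq: analyzed form of M_eta}) for $M^{*}_{\eta}$ and the deliberate design choice $\bar{J}_b<J_b\le M^{*}_{\eta}$ with $\Lambda_{\eta}=\bar{J}_b^{1/2}Q$. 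Without establishing this condition your argument would go through for an arbitrary nominal $\bar{G}_{\eta}$, which is false: a DOB inner loop with a badly chosen nominal model can be destabilized in exactly this fast time scale. You should therefore add (i) the verification that the quasi-steady-state equation for $u^{\eta}$ has (\ref{eq: solution for ustar}) as its unique solution, and (ii) the sector/contraction estimate on $\Gamma_{\eta}$ that makes the boundary-layer system exponentially stable, before invoking the composite Lyapunov or Tikhonov argument; the rest of your outline then matches the paper's proof.
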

\begin{proof}
Proof of this theorem will follow the procedure introduced in \cite{kim2017robust}. $\hfill \blacksquare$
\end{proof}

\begin{lemma} \label{lemma for standard singular form}
With fast variables $\xi^{\eta} \delequal [\xi^{\eta T}_1 \ \xi^{\eta T}_2 \ \xi^{\eta T}_3]^{\top}$ and $\ \zeta^{\eta} \delequal [\zeta^{\eta T}_1 \ \zeta^{\eta T}_2 \ \zeta^{\eta T}_3]^{\top} \in \mathbb{R}^{6 \times 1}$ defined as 
\begin{equation} \label{eq: standard singular form of wire-pulling mode}
    \begin{split}
    \xi^{\eta}_i &= 
    \begin{bmatrix}
        \xi^{\eta}_{i, 1} \\
        \xi^{\eta}_{i, 2}
    \end{bmatrix} = 
    \begin{bmatrix}
        \frac{1}{\epsilon_{\eta}} q^{\eta}_{i, 1} + \frac{a^{\eta}_{i, 1}}{a^{\eta}_{i, 0}}q^{\eta}_{i, 2} - \frac{1}{\epsilon_{\eta}}\eta_{i}\\
        q^{\eta}_{i, 2} - \dot{\eta}_i
    \end{bmatrix} \in \mathbb{R}^{2 \times 1} \\
    \zeta^{\eta}_i &= 
    \begin{bmatrix}
        \zeta^{\eta}_{i, 1} \\
        \zeta^{\eta}_{i, 2}
    \end{bmatrix} = 
    \begin{bmatrix}
        p^{\eta}_{i, 1} - \Lambda_{i}\dot{q}^{\eta}_{i, 2}\\
        \epsilon_{\eta}(\dot{p}^{\eta}_{i, 1} - \ddot{q}^{\eta}_{i, 2})
    \end{bmatrix} \in \mathbb{R}^{2 \times 1},
    \end{split}
\end{equation}
the closed-loop system of the actual model describing wire-pulling mode can be rearranged in the standard singular perturbation form as follows.
\begin{equation}
    \begin{split}
        \ddot{\eta} &= F_{\eta} + G_{\eta}(\tau_{b 0} + (\Lambda_{\eta}\bar{G}_{\eta})^{-1}\Pi(u^{\eta}))+\Delta_{\eta} \\
        {\epsilon}_{\eta} \dot{\xi}^{\eta}_{i} &= A^{\eta}_{\xi, i}\xi^{\eta}_{i}-\epsilon_{\eta} B^{\eta}_2(F_{\eta, i} + G_{\eta, i}\tau_b + \Delta_{\eta, i}) \\ 
        {\epsilon}_{\eta} \dot{\zeta}^{\eta}_{i} &= A^{\eta}_{\zeta, i}\zeta^{\eta}_{i}+ B^{\eta}_2 a^{\eta}_{i, 0} (\tau_{b 2, i} \\ & \qquad \quad \quad - \sum_{j = 1}^{3} \Lambda_{\eta, i, j}(F_{\eta, j} + G_{\eta, j}\tau_b + \Delta_{\eta, j}))
    \end{split}
\end{equation}
where $B^{\eta}_2 \delequal [0 \ 1]^{\top}$ and
\begin{equation*}
    A^{\eta}_{\xi, i} \delequal
    \begin{bmatrix}
        -a^{\eta}_{i, 1} & 1 \\
        -a^{\eta}_{i, 0} & 0
    \end{bmatrix}, \quad A^{\eta}_{\zeta, i} \delequal
    \begin{bmatrix}
         0 & 1 \\
        -a^{\eta}_{i, 0} & -a^{\eta}_{i, 1}
    \end{bmatrix}.
\end{equation*}
\end{lemma}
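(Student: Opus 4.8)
The plan is to prove the lemma by a direct change of coordinates on the closed-loop $WP$-mode system, verifying the claimed standard singular-perturbation structure by differentiation. The closed-loop system comprises (i) the actual rotational dynamics (\ref{eq: actual model of WP mode}) with $\tau_b = \tau_{b0} + (\Lambda_\eta\bar{G}_\eta)^{-1}\Pi_\eta(u^\eta)$ substituted from (\ref{nominal controller for wire-pulling mode}), and (ii) the two DOB filter states $q^\eta$ and $p^\eta$, whose dynamics are the first line of (\ref{nominal controller for wire-pulling mode}), namely $\dot{q}^\eta_i = A^\eta_i q^\eta_i + B^\eta_i \eta_i$ and $\dot{p}^\eta_i = A^\eta_i p^\eta_i + B^\eta_i \tau_{b2,i}$, where $A^\eta_i$ and $B^\eta_i$ carry the $1/\epsilon_\eta$ and $1/\epsilon_\eta^2$ weights. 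First I would record the scalar form of these filters, $\dot{q}^\eta_{i,1} = q^\eta_{i,2}$ and $\dot{q}^\eta_{i,2} = -(a^\eta_{i,0}/\epsilon_\eta^2) q^\eta_{i,1} - (a^\eta_{i,1}/\epsilon_\eta) q^\eta_{i,2} + (a^\eta_{i,0}/\epsilon_\eta^2)\eta_i$, and likewise for $p^\eta_i$ with $\eta_i$ replaced by $\tau_{b2,i}$. The slow equation of the lemma is then nothing but (\ref{eq: actual model of WP mode}) rewritten with the substituted input and the residual term $\Delta_\eta$ displayed explicitly.

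For the $\xi^\eta$-subsystem I would differentiate the two components of (\ref{eq: standard singular form of wire-pulling mode}). Using the scalar filter equations, $\epsilon_\eta\dot{\xi}^\eta_{i,1}$ is computed and, after collecting the $q^\eta_{i,1}$, $q^\eta_{i,2}$, $\eta_i$, and $\dot{\eta}_i$ terms, shown to equal $-a^\eta_{i,1}\xi^\eta_{i,1} + \xi^\eta_{i,2}$; the crucial point is that the particular combination $\tfrac{1}{\epsilon_\eta}(q^\eta_{i,1} - \eta_i) + \tfrac{a^\eta_{i,1}}{a^\eta_{i,0}} q^\eta_{i,2}$ chosen as $\xi^\eta_{i,1}$ is precisely the one that turns the fast matrix into the companion form $A^\eta_{\xi,i}$ with characteristic polynomial $s^2 + a^\eta_{i,1} s + a^\eta_{i,0}$. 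For the second component, $\epsilon_\eta\dot{\xi}^\eta_{i,2} = \epsilon_\eta\dot{q}^\eta_{i,2} - \epsilon_\eta\ddot{\eta}_i = -a^\eta_{i,0}\xi^\eta_{i,1} - \epsilon_\eta\ddot{\eta}_i$, and substituting $\ddot{\eta}_i = F_{\eta,i} + G_{\eta,i}\tau_b + \Delta_{\eta,i}$ from the slow equation produces the forcing term $-\epsilon_\eta B^\eta_2(F_{\eta,i} + G_{\eta,i}\tau_b + \Delta_{\eta,i})$. Together these give the $\xi^\eta$-equation in the lemma.

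The $\zeta^\eta$-subsystem is treated the same way but is more delicate, since it couples the $p^\eta$ filter to derivatives of the $q^\eta$ filter and to the saturated signal through $\tau_{b2} = \Lambda_\eta\bar{G}_\eta\tau_{b0} + \Pi_\eta(u^\eta)$ and $u^\eta_i = p^\eta_{i,1} - \sum_{j=1}^3 \Lambda_{\eta,i,j}(\dot{q}^\eta_{j,2} - \bar{F}_{\eta,j})$. I would differentiate $\zeta^\eta_{i,1} = p^\eta_{i,1} - \Lambda_i\dot{q}^\eta_{i,2}$, re-express $\ddot{q}^\eta_{i,2}$ in terms of $q^\eta_{i,1}, q^\eta_{i,2}, \eta_i, \dot{\eta}_i$ via the filter equation (so that no genuinely new state is introduced), use $\zeta^\eta_{i,2} = \epsilon_\eta(\dot{p}^\eta_{i,1} - \ddot{q}^\eta_{i,2})$, and collect terms: after multiplication by $\epsilon_\eta$ the $1/\epsilon_\eta$ and $1/\epsilon_\eta^2$ contributions of $\dot{p}^\eta_{i,2}$ collapse into the block $A^\eta_{\zeta,i}\zeta^\eta_i$, while the mismatch between the filtered input $\tau_{b2,i}$ and the true acceleration combination $\sum_{j=1}^3 \Lambda_{\eta,i,j}(F_{\eta,j} + G_{\eta,j}\tau_b + \Delta_{\eta,j})$ is left over as the residual forcing $B^\eta_2 a^\eta_{i,0}(\cdot)$, which is exactly the form stated.

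I expect the main obstacle to be the bookkeeping rather than any conceptual step: keeping the powers of $\epsilon_\eta$ straight so that every $1/\epsilon_\eta$ and $1/\epsilon_\eta^2$ term in $\dot{q}^\eta$ and $\dot{p}^\eta$ recombines into an $O(1)$ block after the coordinate change, and, in the $\zeta^\eta$-equation, making sure the $\Lambda_{\eta,i,j}$ coupling across components $i$ and $j$ lands entirely inside the residual forcing term and not inside $A^\eta_{\zeta,i}$. A secondary point, needed only if one wants to conclude that the system is a genuine standard singular-perturbation model (and not merely written in that shape), is that the forcing terms are uniformly bounded on $\mathcal{U}_\eta$ and the $\xi^\eta$-forcing is $O(\epsilon_\eta)$; this follows from Remark \ref{remark: C^2 and boundedness of F,G} and Assumption \ref{assumption on disturbance in wire-pulling mode} together with the global boundedness of $\Pi_\eta$. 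It is also worth double-checking that the chosen $\zeta^\eta_{i,2}$ agrees with $\epsilon_\eta\dot{\zeta}^\eta_{i,1}$ up to the stated residual.
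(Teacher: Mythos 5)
Your proposal is correct and takes essentially the same route as the paper: the paper's ``proof'' simply defers to the cited reference's Appendix A, and the derivation there is exactly the change of coordinates plus differentiation you describe (your $\xi^{\eta}$-block bookkeeping, e.g.\ $\epsilon_{\eta}\dot{\xi}^{\eta}_{i,1} = -a^{\eta}_{i,1}\xi^{\eta}_{i,1} + \xi^{\eta}_{i,2}$, checks out against the filter equations). Your closing caution about whether the $\Lambda_{i}$ factor in $\zeta^{\eta}_{i,1}$ is consistent with the definition of $\zeta^{\eta}_{i,2}$ and with $\epsilon_{\eta}\dot{\zeta}^{\eta}_{i,1}$ flags a notational slip in the lemma statement rather than a gap in your argument.
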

\begin{proof}
The derivation of Lemma \ref{lemma for standard singular form} is based on \cite[Appendix A]{bang2009robust}. $\hfill \blacksquare$ 
\end{proof} 

To obtain the reduced model of (\ref{eq: actual model of WP mode}), the quasi-states of ($\xi^{\eta}$, $\zeta^{\eta}$) are obtained as follows by setting $\epsilon_{\eta} = 0$.
\begin{equation} \label{eq: quasi-states of xi and zeta}
    \begin{split}
        \xi^{\eta, *} &= 0_{6 1} \\  
        \zeta^{\eta, *}_{[1]} &= \tau_{b 2} - \Lambda_{\eta}(F_{\eta} + G_{\eta}\tau_b + \Delta_{\eta}), \ \zeta^{\eta, *}_{[2]} = 0_{3 1}
    \end{split}
\end{equation}
where $\zeta^{\eta, *}_{[1]} \delequal [\zeta^{\eta, *}_{1, 1} \ \zeta^{\eta, *}_{2, 1} \ \zeta^{\eta, *}_{3, 1}]^{\top}$ and $\zeta^{\eta, *}_{[2]} \delequal [\zeta^{\eta, *}_{1, 2} \ \zeta^{\eta, *}_{2, 2} \ \zeta^{\eta, *}_{3, 2}]^{\top}$.

Since $u^{\eta}$ turns out to be equal to $\zeta^{\eta}_{[1]} + \Lambda_{\eta} \bar{F}_{\eta}$ from (\ref{nominal controller for wire-pulling mode}) and (\ref{eq: standard singular form of wire-pulling mode}), an equation on $u^{\eta *}$ can be formulated as
\begin{multline} \label{eq: equation for ustar}
    u^{\eta *} - \Lambda_{\eta}((\bar{F}_{\eta} - F_{\eta}) + (\bar{G}_{\eta} - G_{\eta})\tau_{b 0} - \Delta_{\eta}) \\- (I_{3} - \Lambda_{\eta}G_{\eta}\bar{G}_{\eta}^{-1}\Lambda_{\eta}^{-1})\Pi_{\eta}(u^{\eta *}) = 0_{3 1}.
\end{multline}
Moreover, since $u^{\eta *} \in S_{u^\eta}$, $\Pi_{\eta}(u^{\eta *})$ equals $u^{\eta *}$. Therefore, an explicit expression for $u^{\eta *}$ is obtained as follows.
\begin{equation} \label{eq: solution for ustar}
    u^{\eta *} = \Lambda_{\eta} \bar{G}_{\eta}G_{\eta}^{-1}(\bar{F}_{\eta} - F_{\eta} + (\bar{G}_{\eta} - G_{\eta})\tau_{b 0} - \Delta_{\eta})
\end{equation}

The next step is to show that the only solution for (\ref{eq: equation for ustar}) is (\ref{eq: solution for ustar}). Thus, we declare a function $\Gamma_{\eta}(\delta)$ which replaces $u^{\eta *}$ in (\ref{eq: equation for ustar}) with $u^{\eta *} + \delta$ as below.
\begin{equation*}
    \begin{split}
        \Gamma_{\eta}(\delta) \delequal& u^{\eta *} + \delta - \Lambda_{\eta}((\bar{F}_{\eta} - F_{\eta}) + (\bar{G}_{\eta} - G_{\eta})\tau_{b 0} - \Delta_{\eta}) \\&- (I_{3} - \Lambda_{\eta}G_{\eta}\bar{G}_{\eta}^{-1}\Lambda_{\eta}^{-1})\Pi_{\eta}(u^{\eta *}+\delta) \\ =& \delta + (\Lambda_{\eta}G_{\eta}\bar{G}_{\eta}^{-1}\Lambda_{\eta}^{-1} - I_3)(\Pi_{\eta}(u^{\eta *} + \delta ) - \Pi_{\eta}(u^{\eta *})).
    \end{split}
\end{equation*}

\begin{lemma} \label{lemma for the sector condition of Gamma}
With the assumption that $\eta \in \mathcal{U}_{\eta}$, $\Gamma_{\eta}(\delta)$ belongs to the sector [$1-\kappa, 1+ \kappa$] with $0 < \kappa < 1$. Then accordingly, $\delta = 0_{3 1}$ is the unique solution of $\Gamma_{\eta}(\delta) = 0_{3 1}$.
\end{lemma}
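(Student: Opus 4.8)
The plan is to show that $\Gamma_\eta$ is a Lipschitz perturbation of the identity whose perturbation has Lipschitz constant strictly below one; the sector property and the uniqueness claim then both drop out. First I would rewrite the coefficient matrix, using $\Lambda_\eta \bar{G}_\eta \bar{G}_\eta^{-1}\Lambda_\eta^{-1} = I_3$, as $\Lambda_\eta G_\eta \bar{G}_\eta^{-1}\Lambda_\eta^{-1} - I_3 = \Lambda_\eta(G_\eta - \bar{G}_\eta)\bar{G}_\eta^{-1}\Lambda_\eta^{-1}$, and then substitute $G_\eta = M_\eta^{-1}Q^\top$, $\bar{G}_\eta = \bar{J}_b^{-1}Q^\top$ and $\Lambda_\eta = \bar{J}_b^{1/2}Q$ to obtain $\bar{J}_b^{1/2}Q\,M_\eta^{-1}(\bar{J}_b - M_\eta)\,Q^{-1}\bar{J}_b^{-1/2}$, which vanishes exactly when the nominal rotational inertia $\bar{J}_b$ coincides with the effective inertia $M_\eta$. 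By Remark \ref{remark: C^2 and boundedness of F,G}, the factors $M_\eta$, $M_\eta^{-1}$, $Q$, $Q^{-1}$ and $\bar{J}_b^{\pm 1/2}$ are continuous and bounded on $\mathcal{U}_\eta$; hence, provided the nominal model is identified accurately enough that the mismatch $\|\bar{J}_b - M_\eta\|$ is uniformly small over $\mathcal{U}_\eta$, there is a constant $\kappa \in (0,1)$ with $\|\Lambda_\eta G_\eta \bar{G}_\eta^{-1}\Lambda_\eta^{-1} - I_3\| \le \kappa$ for all $\eta \in \mathcal{U}_\eta$.

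Next I would invoke the regularity of the saturation map: its defining condition $\|\partial \Pi_\eta(u^\eta)/\partial u^\eta\| \le 1$ together with the mean-value inequality gives $\|\Pi_\eta(u^{\eta *}+\delta) - \Pi_\eta(u^{\eta *})\| \le \|\delta\|$ for every $\delta \in \mathbb{R}^3$. Combining the two bounds yields
\begin{equation*}
\|\Gamma_\eta(\delta) - \delta\| \le \|\Lambda_\eta G_\eta \bar{G}_\eta^{-1}\Lambda_\eta^{-1} - I_3\|\,\|\Pi_\eta(u^{\eta *}+\delta) - \Pi_\eta(u^{\eta *})\| \le \kappa\|\delta\|,
\end{equation*}
which is precisely the claim that $\Gamma_\eta$ lies in the sector $[1-\kappa,\,1+\kappa]$: writing $\Gamma_\eta(\delta)=\delta+w$ with $\|w\|\le\kappa\|\delta\|$, one checks $(\Gamma_\eta(\delta)-(1-\kappa)\delta)^\top(\Gamma_\eta(\delta)-(1+\kappa)\delta)=\|w\|^2-\kappa^2\|\delta\|^2\le 0$. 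Uniqueness is then immediate: for $\delta\neq 0_{31}$ one has $\|\Gamma_\eta(\delta)\|\ge\|\delta\|-\|\Gamma_\eta(\delta)-\delta\|\ge(1-\kappa)\|\delta\|>0$, so $\Gamma_\eta(\delta)=0_{31}$ forces $\delta=0_{31}$, while $\Gamma_\eta(0_{31})=0_{31}$ because $\Pi_\eta(u^{\eta *})-\Pi_\eta(u^{\eta *})=0_{31}$. In particular this legitimises the passage from (\ref{eq: equation for ustar}) to (\ref{eq: solution for ustar}).

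I expect the main obstacle to be establishing the uniform bound $\|\Lambda_\eta G_\eta \bar{G}_\eta^{-1}\Lambda_\eta^{-1}-I_3\|\le\kappa<1$ on all of $\mathcal{U}_\eta$: this is not an algebraic identity but a model-fidelity requirement, so one must lean on the boundedness and continuity furnished by Remark \ref{remark: C^2 and boundedness of F,G} and Assumption \ref{assumption on disturbance in wire-pulling mode}, together with compactness of the trajectory envelope, to upgrade an assumed pointwise smallness of the inertia/coupling mismatch into a uniform one. Everything downstream of that bound — the Lipschitz estimate on $\Pi_\eta$, the sector inequality, and the uniqueness argument — is routine.
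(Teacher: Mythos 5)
Your overall scaffolding matches the paper's proof — the Lipschitz bound $\|\Pi_{\eta}(u^{\eta*}+\delta)-\Pi_{\eta}(u^{\eta*})\|\le\|\delta\|$ from $\|\partial\Pi_{\eta}/\partial u^{\eta}\|\le 1$, the resulting estimate $\|\Gamma_{\eta}(\delta)-\delta\|\le\|I_3-\Lambda_{\eta}G_{\eta}\bar{G}_{\eta}^{-1}\Lambda_{\eta}^{-1}\|\,\|\delta\|$, and the sector/uniqueness bookkeeping are all exactly what the paper does (and your explicit verification of the sector inequality and of $\Gamma_{\eta}(0_{31})=0_{31}$ is if anything more careful than the paper's). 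The gap is in the one step that actually carries the lemma: how you get $\|I_3-\Lambda_{\eta}G_{\eta}\bar{G}_{\eta}^{-1}\Lambda_{\eta}^{-1}\|\le\kappa<1$. You obtain it by assuming the nominal inertia is "identified accurately enough," i.e.\ that $\|\bar{J}_b-M_{\eta}\|$ is uniformly small on $\mathcal{U}_{\eta}$. That hypothesis appears nowhere in the paper's assumptions, and in this application it is actually false: by (\ref{eq: analyzed form of M_eta}), $M^{*}_{\eta}=Q^{-\top}M_{\eta}Q^{-1}$ equals $J_b$ plus the arm/end-effector inertias and the constraint-induced terms $m_b[R_{Ib}\hat{p}_{bE}]^{\top}[R_{Ib}\hat{p}_{bE}]$, etc., which are comparable to or larger than $J_b$ itself. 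So a smallness-of-mismatch argument cannot deliver the lemma under the stated setup; it proves a different (weaker, conditional) statement.

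The paper's route is structural rather than perturbative, and this is the missing idea: because every added term in (\ref{eq: analyzed form of M_eta}) is positive semidefinite, one has $M^{*}_{\eta}>J_b$ pointwise for $\theta\in(-\tfrac{\pi}{2},\tfrac{\pi}{2})$; then the \emph{design choice} $\bar{J}_b<J_b$ (deliberately underestimating the inertia in the nominal model (\ref{eq: definition of F_eta and G_eta}), with $\Lambda_{\eta}=\bar{J}_b^{1/2}Q$) gives $0<\bar{J}_b^{1/2}M_{\eta}^{*-1}\bar{J}_b^{1/2}<I_3$, hence $\|I_3-\bar{J}_b^{1/2}M_{\eta}^{*-1}\bar{J}_b^{1/2}\|<1$, uniformly on the bounded set $\mathcal{U}_{\eta}$ by Remark \ref{remark: C^2 and boundedness of F,G} — no matter how large the mismatch $\bar{J}_b-M_{\eta}$ is. This one-sided (matrix-inequality) condition, not model fidelity, is what makes the DOB contraction argument robust, and it is precisely the point of choosing $\bar{G}_{\eta}=\bar{J}_b^{-1}Q^{\top}$ with $\bar{J}_b$ small. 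To repair your proof, replace the "mismatch uniformly small" assumption with this positive-definiteness comparison; the rest of your argument can stand as written.
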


\begin{proof}
From the definition of $\Gamma_{\eta}(\delta)$, the following inequality is derived.
\begin{equation*}
    | \Gamma_{\eta}(\delta) - \delta | \leq \| I_{3} - \Lambda_{\eta}G_{\eta}\bar{G}_{\eta}^{-1}\Lambda_{\eta}^{-1} \| | \Pi_{\eta}(u^{\eta *} + \delta ) - \Pi_{\eta}(u^{\eta *}) |
\end{equation*}
Since $\| \partial \Pi_{\eta}(u^{\eta}) / \partial u^{\eta} \| \leq 1$, the inequality above is transformed into
\begin{equation*}
    | \Gamma_{\eta}(\delta) - \delta | \leq \| I_{3} - \Lambda_{\eta}G_{\eta}\bar{G}_{\eta}^{-1}\Lambda_{\eta}^{-1} \| | \delta |.
\end{equation*}
Based on (\ref{eq: WP - position and angular velocity values}) and the relation $\dot{R}_{I B} = R_{I B}[\omega^{B}_{I B}]$, if $\theta \in (-\frac{\pi}{2}, \frac{\pi}{2})$, $M^{*}_{\eta} \delequal Q^{-\top}M_{\eta}Q^{-1}$ is organized as follows.
\begin{equation} \label{eq: analyzed form of M_eta}
\begin{split}
    M^{*}_{\eta} &= m_b [R_{I b}\hat{p}_{b E}]^{\top}[R_{I b}\hat{p}_{b E}] + m_1 [R_{I1}\hat{p}^b_{1E}]^{\top}\\&\times [R_{I1}\hat{p}^b_{1E}] + m_2 [R_{I2}\hat{p}^b_{2E}]^{\top} [R_{I2}\hat{p}^b_{2E}]+ J_b \\ &+ R_{b 1}^{\top} J_1 R_{b 1} + R_{b 2}^{\top} J_2 R_{b 2} + R_{b E}^{\top} J_E R_{b E}
    \end{split}
\end{equation}
If we set $\bar{J}_b < J_b$, we can formulate the inequality below by substituting (\ref{eq: definition of F_eta and G_eta}) and $\Lambda_{\eta} = \bar{J}_b^{(1/2)}Q$ since $J_b < M^{*}_{\eta}$. 
\begin{equation}
    \| I_{3} - \Lambda_{\eta}G_{\eta}\bar{G}_{\eta}^{-1}\Lambda_{\eta}^{-1} \| = \| I_3 - \bar{J}_b^{\frac{1}{2}} M_{\eta}^{* -1}\bar{J}_b^{\frac{1}{2}} \| < 1.
\end{equation}
Thus, $| \Gamma_{\eta}(\delta) - \delta |$ turns out to be smaller than $| \delta |$ and there exists $\kappa \in (0, 1)$ such that $\Gamma_{\eta}(\delta)$ belongs to the sector [1 - $\kappa$, 1 + $\kappa$]. Accordingly, it is shown that the only solution of $\Gamma_{\eta}(\delta) = 0_{3 1}$ is $\delta = 0_{3 1}$. $\hfill \blacksquare$
\end{proof}

To prove that the fast dynamic system in Lemma \ref{lemma for standard singular form} is exponentially stable, we derive 1st order differential equations for $\tilde{\xi}^{\eta} \delequal \xi^{\eta}-\xi^{\eta *}$ and $\tilde{\zeta}^{\eta} \delequal \zeta^{\eta}-\zeta^{\eta *}$ as
\begin{equation} \label{eq: error dynamics of xi and zeta}
    \begin{split}
         \epsilon_{\eta}\dot{\tilde{\xi}}^{\eta} &= A^{\eta}_{\xi}\tilde{\xi}^{\eta} - \epsilon_{\eta} B^{\eta}_2 \{ F_{\eta} + G_{\eta} \tau_{b 0} + G_{\eta}\bar{G}_{\eta}^{-1}\Lambda_{\eta}^{-1} \\ & \qquad \qquad \qquad \times \Pi_{\eta}(\zeta^{\eta *}_{[1]} + \zeta^{\eta}_{[1]} + \bar{F}_{\eta}) + \Delta_{\eta} \} \\
         \epsilon_{\eta}\dot{\tilde{\zeta}}^{\eta} &= A^{\eta}_{\tilde{\zeta}}\tilde{\zeta}^{\eta} - B^{\eta}_2 a^{\eta}_0 \Gamma_{\eta}(\tilde{\zeta}^{\eta}_{[1]}) - \epsilon_{\eta}B^{\eta}_{1}\dot{\zeta}^{\eta *}_{[1]}
    \end{split}
\end{equation}
where $\tilde{\zeta}^{\eta}_{[1]} \delequal \zeta^{\eta}_{[1]} - \zeta^{\eta *}_{[1]}$, $A^{\eta}_{\zeta} \delequal$ \verb|blkdiag|$\{ A^{\eta}_1, A^{\eta}_2, A^{\eta}_3 \}$, $B^{\eta}_1 \delequal I_3 \otimes [1 \ 0]^{\top}$, $B^{\eta}_2 \delequal I_3 \otimes [0 \ 1]^{\top}$, $a^{\eta}_0 \delequal$ \verb|diag|$\{ a^{\eta}_{1, 0}, a^{\eta}_{2, 0}, a^{\eta}_{3, 0} \}$ and $A^{\eta}_{\tilde{\zeta}} \delequal$ \verb|blkdiag|$\{ A^{\eta}_{\tilde{\zeta}, 1}, A^{\eta}_{\tilde{\zeta}, 2}, A^{\eta}_{\tilde{\zeta}, 3} \}$ with $ A^{\eta}_{\tilde{\zeta}, i} \delequal
    \begin{bmatrix}
        0 & 1 \\
        0 & -a^{\eta}_{i, 1}
    \end{bmatrix}$.
\begin{remark} \label{remark: boundness of G_eta and Gbar_eta}
Since $\| \Lambda_{\eta}G_{\eta}\bar{G}^{-1}_{\eta}\Lambda_{\eta}^{-1} - I_3 \| \leq \kappa < 1$, the given inequality
\begin{multline*}
    | \Lambda_{\eta}G_{\eta}\bar{G}^{-1}_{\eta}\Lambda_{\eta}^{-1} v |^2 \\< (\Lambda_{\eta}G_{\eta}\bar{G}^{-1}_{\eta}\Lambda_{\eta}^{-1} v)^{\top} v + v^{\top} (\Lambda_{\eta}G_{\eta}\bar{G}^{-1}_{\eta}\Lambda_{\eta}^{-1} v)     
\end{multline*}
holds $\forall v \neq 0_{3 1}$. If there exists $\bar{v} \neq 0_{3 1}$ satisfying $\Lambda_{\eta}G_{\eta}\bar{G}^{-1}_{\eta}\Lambda_{\eta}^{-1} \bar{v} = 0_{3 1}$, it becomes contrary to the inequality above. Therefore, $\Lambda_{\eta}G_{\eta}\bar{G}^{-1}_{\eta}\Lambda_{\eta}^{-1}$ is an invertible matrix. Accordingly, since $G_{\eta}$ is not singular, $\| \bar{G}^{-1}_{\eta} \|$ is bounded. Moreover, since $\| \dot{G}^{-1}_{\eta} \| \leq \| G^{-1}_{\eta} \| \| \dot{G}_{\eta} \| \| G^{-1}_{\eta} \| $ holds and $(\bar{G}^{-1}_{\eta}, \dot{\bar{G}}^{-1}_{\eta})$ are bounded in $\mathcal{U}_{\eta}$ based on the relationship $\bar{G}_{\eta} = \bar{J}_{b}^{-1}Q^{T}$, $\| \dot{G}_{\eta}\|$ is also bounded.
\end{remark}
\begin{remark} \label{remark: remaining in U_eta}
Since the initial condition in Theorem \ref{thm: stability analysis on wire-pulling mode} is $[\bar{\eta}^N(0)^{\top} \ \dot{\bar{\eta}}^N(0)^{\top} \ \bar{\tau}^N_{b 0}(0)^{\top}]^{\top} = [\eta(0)^{\top} \ \dot{\eta}(0)^{\top} \ \tau_{b 0}(0)^{\top}]^{\top}$ and $\eta$ is bounded due to Remark \ref{remark: C^2 and boundedness of F,G} and the definition of $\Pi_{\eta}$, there exists $T_1 > 0$ such that $[\eta(t)^{\top} \tau_{b 0}^{\top}]^{\top}$ remains in $\mathcal{U}_{\eta}$ for $t \in [0 \ T_1]$. Also, there exists $T_2>0$ such that $| [\eta(0)^{\top} \ \dot{\eta}(0)^{\top} \ \tau_{b 0}(0)^{\top}]^{\top} - [\bar{\eta}^N(0)^{\top} \ \dot{\bar{\eta}}^N(0)^{\top} \ \bar{\tau}^N_{b 0}(0)^{\top}]^{\top} | \leq \sigma/2$ for $t \in [0 \ T_2]$.
\end{remark}

\begin{lemma} \label{lemma: equation for exponential stability of fast dynamics}
For $t_f \delequal$ min$\{ T_1, T_2 \}$, there exists $\epsilon^*_{\eta}$ such that the solutions of (\ref{eq: error dynamics of xi and zeta}) initiated from any $\xi(0)$ and $\zeta(0)$ satisfies
\begin{multline} \label{eq: equation for exponential stability of fast dynamics}
    | [\tilde{\xi}^{\eta}(t)^{\top} \ \tilde{\zeta}^{\eta}(t)^{\top}]^{\top} | \\ \leq \lambda_1 e^{-\lambda_2 (t/\epsilon_{\eta})} | [\tilde{\xi}^{\eta}(0)^{\top} \ \tilde{\zeta}^{\eta}(0)^{\top}]^{\top} | + \Omega(\epsilon_{\eta})
\end{multline}
for $\forall$ $t$ $\in [0 \ t_f]$ and $0 < \epsilon_{\eta} \leq \epsilon^{*}_{\eta}$, with some positive constants $\lambda_1$ and $\lambda_2$, and a class-$\mathcal{K}$ function $\Omega$.
\end{lemma}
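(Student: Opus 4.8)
The plan is to move to the boundary-layer time scale $\tau = t/\epsilon_{\eta}$, in which (\ref{eq: error dynamics of xi and zeta}) reads $d\tilde{\xi}^{\eta}/d\tau = A^{\eta}_{\xi}\tilde{\xi}^{\eta} - \epsilon_{\eta}B^{\eta}_{2}\{F_{\eta} + G_{\eta}\tau_{b0} + G_{\eta}\bar{G}^{-1}_{\eta}\Lambda^{-1}_{\eta}\Pi_{\eta}(\zeta^{\eta *}_{[1]} + \zeta^{\eta}_{[1]} + \bar{F}_{\eta}) + \Delta_{\eta}\}$ and $d\tilde{\zeta}^{\eta}/d\tau = A^{\eta}_{\tilde{\zeta}}\tilde{\zeta}^{\eta} - B^{\eta}_{2}a^{\eta}_{0}\Gamma_{\eta}(\tilde{\zeta}^{\eta}_{[1]}) - \epsilon_{\eta}B^{\eta}_{1}\dot{\zeta}^{\eta *}_{[1]}$, and to treat this as an $O(\epsilon_{\eta})$ perturbation of the boundary-layer subsystems obtained by deleting the $\epsilon_{\eta}$-terms. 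First I would prove each boundary-layer subsystem is exponentially stable at the origin. For $\tilde{\xi}^{\eta}$ this is immediate: every block $A^{\eta}_{\xi, i}$ has characteristic polynomial $\lambda^{2} + a^{\eta}_{i, 1}\lambda + a^{\eta}_{i, 0}$ with $a^{\eta}_{i, 0}, a^{\eta}_{i, 1} > 0$, hence is Hurwitz. For $\tilde{\zeta}^{\eta}$, I would use Lemma \ref{lemma for the sector condition of Gamma}: since $\Gamma_{\eta}(0_{31}) = 0_{31}$ and $\Gamma_{\eta}$ lies in the sector $[1-\kappa, 1+\kappa]$, we may write $\Gamma_{\eta}(\delta) = \delta + \psi_{\eta}(\delta)$ with $|\psi_{\eta}(\delta)| \leq \kappa|\delta|$, so the $\tilde{\zeta}^{\eta}$ boundary layer becomes a block-diagonal Hurwitz system — each $i$-th block obtained by absorbing the linear part of $\Gamma_{\eta}$ into $A^{\eta}_{\tilde{\zeta}}$ and having characteristic polynomial $\lambda^{2} + a^{\eta}_{i, 1}\lambda + a^{\eta}_{i, 0}$ — driven by the small nonlinearity $-B^{\eta}_{2}a^{\eta}_{0}\psi_{\eta}(\tilde{\zeta}^{\eta}_{[1]})$. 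Solving the Lyapunov equation for this Hurwitz matrix, $A^{\eta \top}P_{\zeta} + P_{\zeta}A^{\eta} = -I$, and using $|\psi_{\eta}| \leq \kappa|\cdot|$ gives $\dot{V}_{\zeta} \leq -(1 - c_{0}\kappa)\,|\tilde{\zeta}^{\eta}|^{2}$ with $c_{0} = 2\|P_{\zeta}B^{\eta}_{2}a^{\eta}_{0}\|$, which is negative definite once $\kappa$ is small enough; by the estimate in Lemma \ref{lemma for the sector condition of Gamma}, $\kappa = \|I_{3} - \bar{J}^{1/2}_{b}M_{\eta}^{* -1}\bar{J}^{1/2}_{b}\|$ can be made arbitrarily small by choosing the nominal inertia $\bar{J}_{b}$ small, and alternatively this step can be closed with the circle criterion as in \cite{back2009inner, kim2017robust}.

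Next I would build a composite Lyapunov function $W = \tilde{\xi}^{\eta \top}P_{\xi}\tilde{\xi}^{\eta} + \tilde{\zeta}^{\eta \top}P_{\zeta}\tilde{\zeta}^{\eta}$, with $P_{\xi}$ solving the Lyapunov equation for $A^{\eta}_{\xi}$, and differentiate along (\ref{eq: error dynamics of xi and zeta}). In the $\tau$ scale the coupling of $\tilde{\zeta}^{\eta}$ into the $\tilde{\xi}^{\eta}$ equation appears only through the globally bounded $\Pi_{\eta}$-term multiplied by $\epsilon_{\eta}$, and the term $\epsilon_{\eta}B^{\eta}_{1}\dot{\zeta}^{\eta *}_{[1]}$ is likewise $O(\epsilon_{\eta})$. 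To make these $O(\epsilon_{\eta})$ claims rigorous I would invoke Remark \ref{remark: remaining in U_eta} to keep the slow variable $[\eta^{\top} \ \tau_{b0}^{\top}]^{\top}$ inside $\mathcal{U}_{\eta}$ on $[0, t_{f}]$, and then Remarks \ref{remark: C^2 and boundedness of F,G} and \ref{remark: boundness of G_eta and Gbar_eta} with Assumption \ref{assumption on disturbance in wire-pulling mode} to conclude that $F_{\eta}$, $G_{\eta}$, $\bar{G}^{-1}_{\eta}$, $\Delta_{\eta}$ and their first derivatives are bounded there, that $\Pi_{\eta}$ is globally bounded by construction, and hence that $\dot{\zeta}^{\eta *}_{[1]}$ — assembled from $\tau_{b2}$, $F_{\eta}$, $G_{\eta}$, $\Delta_{\eta}$ and their derivatives — is bounded by a constant independent of $\epsilon_{\eta}$. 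Combining with the two boundary-layer estimates gives $\dot{W} \leq -(c_{1}/\epsilon_{\eta})\,|[\tilde{\xi}^{\eta \top} \ \tilde{\zeta}^{\eta \top}]^{\top}|^{2} + c_{2}\,|[\tilde{\xi}^{\eta \top} \ \tilde{\zeta}^{\eta \top}]^{\top}|$ for all sufficiently small $\epsilon_{\eta}$, and the comparison lemma then yields (\ref{eq: equation for exponential stability of fast dynamics}), with $\lambda_{1}$, $\lambda_{2}$ read off from $P_{\xi}$, $P_{\zeta}$ and $c_{1}$, and with the $O(\epsilon_{\eta})$ forcing producing the class-$\mathcal{K}$ ultimate-bound term $\Omega(\epsilon_{\eta})$. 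The constant $\epsilon^{*}_{\eta}$ is then fixed as the largest $\epsilon_{\eta}$ for which the $-(c_{1}/\epsilon_{\eta})$ term still dominates the residual $O(1)$ cross terms in $\dot{W}$ uniformly on $[0, t_{f}]$.

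The main obstacle I anticipate is the exponential stability of the $\tilde{\zeta}^{\eta}$ boundary-layer subsystem: unlike $\tilde{\xi}^{\eta}$ it is genuinely nonlinear, so the Lyapunov estimate only closes because of the sector characterization of $\Gamma_{\eta}$ in Lemma \ref{lemma for the sector condition of Gamma} together with a quantitative smallness of $\kappa$, which rests on the design freedom in $\bar{J}_{b}$ exposed by (\ref{eq: analyzed form of M_eta}). A secondary technical point is checking that $\dot{\zeta}^{\eta *}_{[1]}$ and the $\Pi_{\eta}$-coupling term are bounded uniformly in $\epsilon_{\eta}$ on $[0, t_{f}]$, so that they are genuinely absorbed into $\Omega$; once these are in place, what remains is the routine singular-perturbation and comparison-lemma bookkeeping used in \cite{bang2009robust, kim2017robust, lee2020aerial_2}.
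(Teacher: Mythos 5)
The paper itself does not spell this proof out -- it simply defers to \cite{lee2020aerial_2} -- and your overall skeleton (boundary-layer time scale $\tau = t/\epsilon_{\eta}$, Hurwitz blocks for $\tilde{\xi}^{\eta}$, sector characterization of $\Gamma_{\eta}$ for $\tilde{\zeta}^{\eta}$, composite Lyapunov function, uniform boundedness of the slow quantities on $[0,t_f]$ via Remarks \ref{remark: C^2 and boundedness of F,G}--\ref{remark: remaining in U_eta}, comparison lemma producing the $\Omega(\epsilon_{\eta})$ term) is exactly the standard DOB singular-perturbation route used in that reference and in \cite{back2009inner,kim2017robust}. However, there is one concrete step that fails as written: your claim that $\kappa$ ``can be made arbitrarily small by choosing the nominal inertia $\bar{J}_b$ small'' is backwards. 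By the proof of Lemma \ref{lemma for the sector condition of Gamma}, $\kappa$ bounds $\| I_3 - \bar{J}_b^{1/2} M_{\eta}^{*-1} \bar{J}_b^{1/2} \|$; shrinking $\bar{J}_b$ drives $\bar{J}_b^{1/2} M_{\eta}^{*-1} \bar{J}_b^{1/2}$ toward $0_{33}$ and hence $\kappa$ toward $1$, not $0$. Making $\kappa$ small would require $\bar{J}_b \approx M_{\eta}^{*}$, which is impossible uniformly because $M_{\eta}^{*}$ in (\ref{eq: analyzed form of M_eta}) varies with the configuration and the paper's construction only enforces $\bar{J}_b < J_b < M_{\eta}^{*}$, i.e.\ $\kappa < 1$ and nothing stronger. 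Consequently your Lyapunov estimate $\dot{V}_{\zeta} \leq -(1 - c_0 \kappa)|\tilde{\zeta}^{\eta}|^2$ with $c_0 = 2\|P_{\zeta} B^{\eta}_2 a^{\eta}_0\|$ need not be negative definite: $c_0$ scales with the gains $a^{\eta}_{i,0}$ and the generic solution of $A^{\top}P_{\zeta}+P_{\zeta}A=-I$, so $c_0\kappa<1$ is not available for the $\kappa$ you actually have.

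The fix is the route you mention only as an aside: treat the $\tilde{\zeta}^{\eta}$ boundary layer as a Lur'e system with the multivariable sector nonlinearity $\Gamma_{\eta} \in [1-\kappa, 1+\kappa]$, $\kappa<1$, and close exponential stability via the circle-criterion / robust-Hurwitz design condition on $a^{\eta}_{i,0}, a^{\eta}_{i,1}$ exactly as in \cite{back2009inner} (and as invoked in \cite{kim2017robust,lee2020aerial_2,bang2009robust}); note this is a genuine design condition on the DOB coefficients, not something that follows from positivity of $a^{\eta}_{i,0}, a^{\eta}_{i,1}$ alone, since frozen-gain Hurwitzness does not by itself handle the state-dependent, cross-coupled $\Gamma_{\eta}$. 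With that substitution, the rest of your argument (boundedness of $\dot{\zeta}^{\eta *}_{[1]}$ and of the $\epsilon_{\eta}$-scaled $\Pi_{\eta}$ coupling on $[0,t_f]$, then the comparison-lemma bookkeeping yielding $\lambda_1$, $\lambda_2$ and $\Omega$) is sound and matches the proof the paper points to.
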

\begin{proof}
The proof of Lemma \ref{lemma: equation for exponential stability of fast dynamics} is presented in \cite{lee2020aerial_2}. $\hfill \blacksquare$
\end{proof}

From the quasi-steady state results in (\ref{eq: quasi-states of xi and zeta}) and (\ref{eq: solution for ustar}), the reduced model of (\ref{eq: actual model of WP mode}) is derived as
\begin{equation}
    \begin{split}
        \ddot{\eta} &= F_{\eta} + G_{\eta}(\tau_{b 0}+(\Lambda_{\eta}\bar{G}_{\eta})^{-1}\Pi(u^{\eta *})) + \Delta_{\eta} \\
        &= F_{\eta} + G_{\eta}\tau_{b 0}+ G_{\eta}\bar{G}_{\eta}^{-1}u^{\eta *} + \Delta_{\eta} \\
        &= \bar{F}_{\eta} + \bar{G}_{\eta}\tau_{b 0}
    \end{split}
\end{equation}
which is exactly identical to the nominal model of $WP$ mode (\ref{eq: nominal dynamics of WP mode}). From Remark \ref{remark: remaining in U_eta} and Lemma \ref{lemma: equation for exponential stability of fast dynamics}, $| [\eta(0)^{\top} \ \dot{\eta}(0)^{\top} \ \tau_{b 0}(0)^{\top}]^{\top} - [\bar{\eta}^N(0)^{\top} \ \bar{\dot{\eta}}^N(0)^{\top} \ \bar{\tau}^N_{b 0}(0)^{\top}]^{\top} | \leq \sigma/2$ for $t \in [0 \ T_2]$ holds and $| [\tilde{\xi}^{\eta}(t_f)^{\top} \ \tilde{\zeta}^{\eta}(t_f)^{\top}] | \rightarrow 0$ as $\epsilon_{\eta} \rightarrow 0$. From those aspects, we can show the result of Theorem \ref{thm: stability analysis on wire-pulling mode} by applying Tikhonov's theorem for the infinite time interval presented in \cite[p.456]{khalil2002nonlinear}.

\subsubsection{ST and FF modes}
Due to Theorem 1 in \cite{lee2020aerial_2} and \cite{kim2017robust}, for a given $\sigma > 0$, the inequality $| [\chi(t)^{\top} \ \dot{\chi}(t)^{\top} \ u_0(t)^{\top}]^{\top} - [\bar{\chi}^N(t)^{\top} \ \dot{\bar{\chi}}^N(t)^{\top} \ u_0^{N}(t)^{\top}]^{\top} | \leq \sigma$ holds in $\forall t > 0$ if the condition $[\chi(0)^{\top} \ \dot{\chi}(0)^{\top} \ u_0(0)^{\top}]^{\top} = [\bar{\chi}^N(0)^{\top} \ \dot{\bar{\chi}}^N(0)^{\top} \ u_0^{N}(0)^{\top}]$ is satisfied.

\subsection{Stability and Robustness Analysis for the Entire Operation} 
It is shown that the union of $\bar{v}^N_{WP}$, $\bar{v}^N_{ST}$ and $\bar{v}^N_{FF}$ is a $\sigma$-robust $WP \mapsto ST$ and $ST \mapsto FF$ transition maneuver. Also, with the condition $\sigma \leq \delta_{\eta}$, $\eta$ in $ST$ mode unconditionally reaches $\mathcal{G}\{(ST, FF)\}$ in a finite time since $\| [\eta^{\top} \tau_b^{\top}]^{\top} - [\bar{\eta}^N \tau_{b, 0}^{\top}]^{\top} \| \leq \| v_{ST} - \bar{v}^N_{ST} \| \leq \sigma \leq \delta_{\eta}$ is proved to be valid in $ST$ mode. Subsequently, it is also proved that the actual value of $x_r$ and $x_q$ throughout the whole three modes satisfies the inequalities $\|[x_r^{\top} u_{f}^{\top}]^{\top} - \bar{v}^N_{WP}\| \leq \sigma$ and $\|[x_q^{\top} u_{f}^{\top}]^{\top} - \bar{v}^N_{ST \ \textrm{or} \ FF}\| \leq \sigma$ when the initial values of the solutions from the nominal and actual models are identical. Therefore, the stability and robustness of the whole plug-pulling task with the trajectories (\ref{eq: trajectory (WP)})-(\ref{eq: trajectory (FF)}) and the proposed control structure are guaranteed.

\section{EXPERIMENTAL RESULT \& DISCUSSION}
\subsection{Experimental Setup}
The experimental setup for this study consists of three parts: a hexacopter, a robotic arm, and a frame for the plug-socket system. We assembled the hexacopter with the off-the-shelf frame DJI F550, six KDE2315XF-967 motors with corresponding KDEXF-UAS35 35A+ electronic speed controllers and 9-inches T-Motor polymer rotors, two Turnigy LiPo batteries for power supplement, and Intel NUC for computing. It executes the proposed DOB controller, the Kalman filter for estimating the value of $\ddot{\gamma}$ and the navigation algorithm with OptiTrack on Robot Operating System (ROS) in Ubuntu 18.04, and a flight controller Pixhawk 4, all onboard. The robotic arm is comprised of ROBOTIS dynamixel XH430 and XM540 servo motors. We manufacture a stand for a 110 V socket and firmly attach it to the wooden plate which weighs about 11  kg as in Fig. \ref{fig: socket_frame}.
\begin{figure}[t]
\begin{subfigure}{0.24\textwidth} \centering
\includegraphics[width=0.57\linewidth]{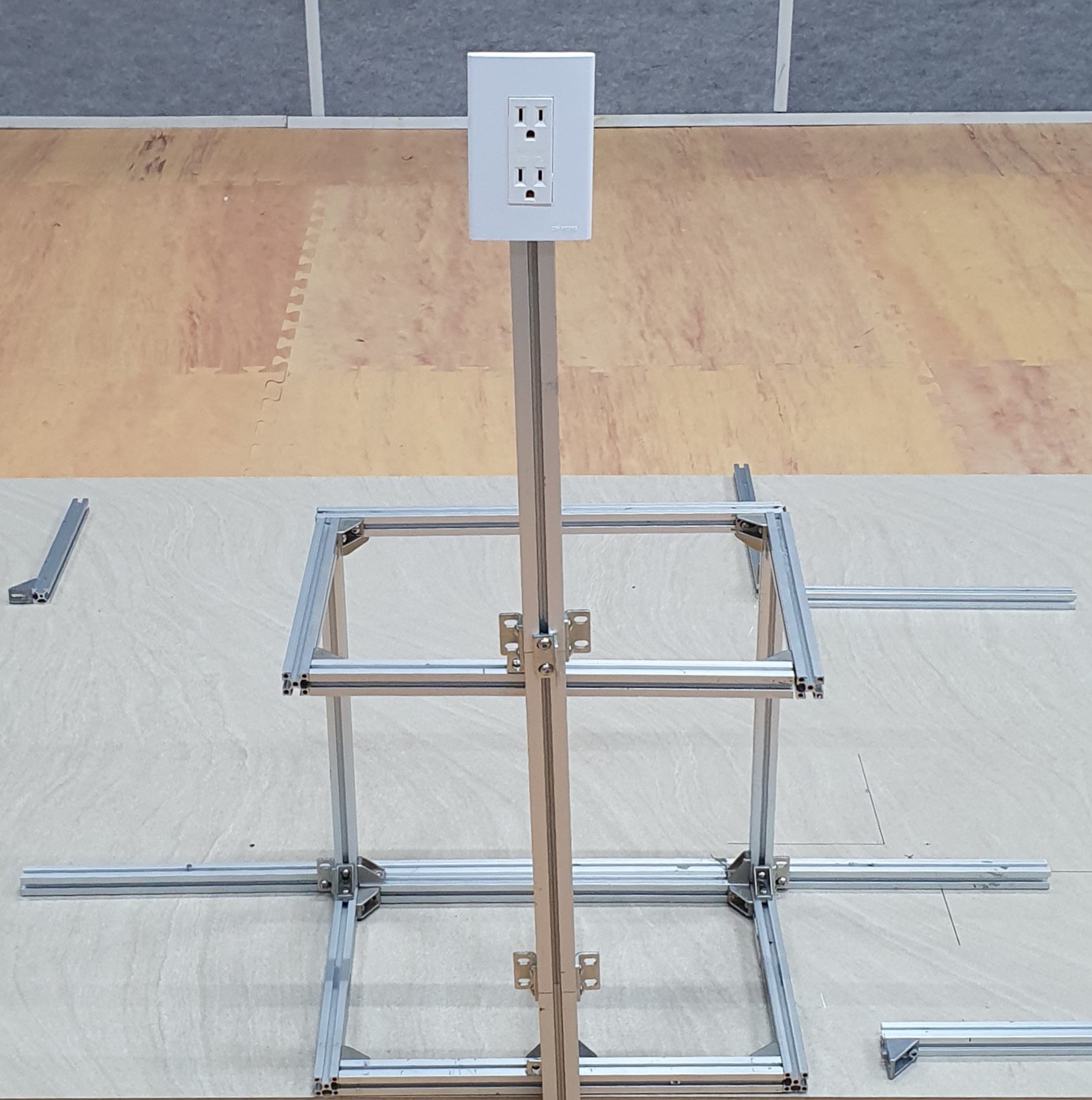} 
\caption{Overall view} 
\label{fig: socket_frame_front_view}
\end{subfigure}
\begin{subfigure}{0.24\textwidth} \centering
\includegraphics[width=0.76\linewidth]{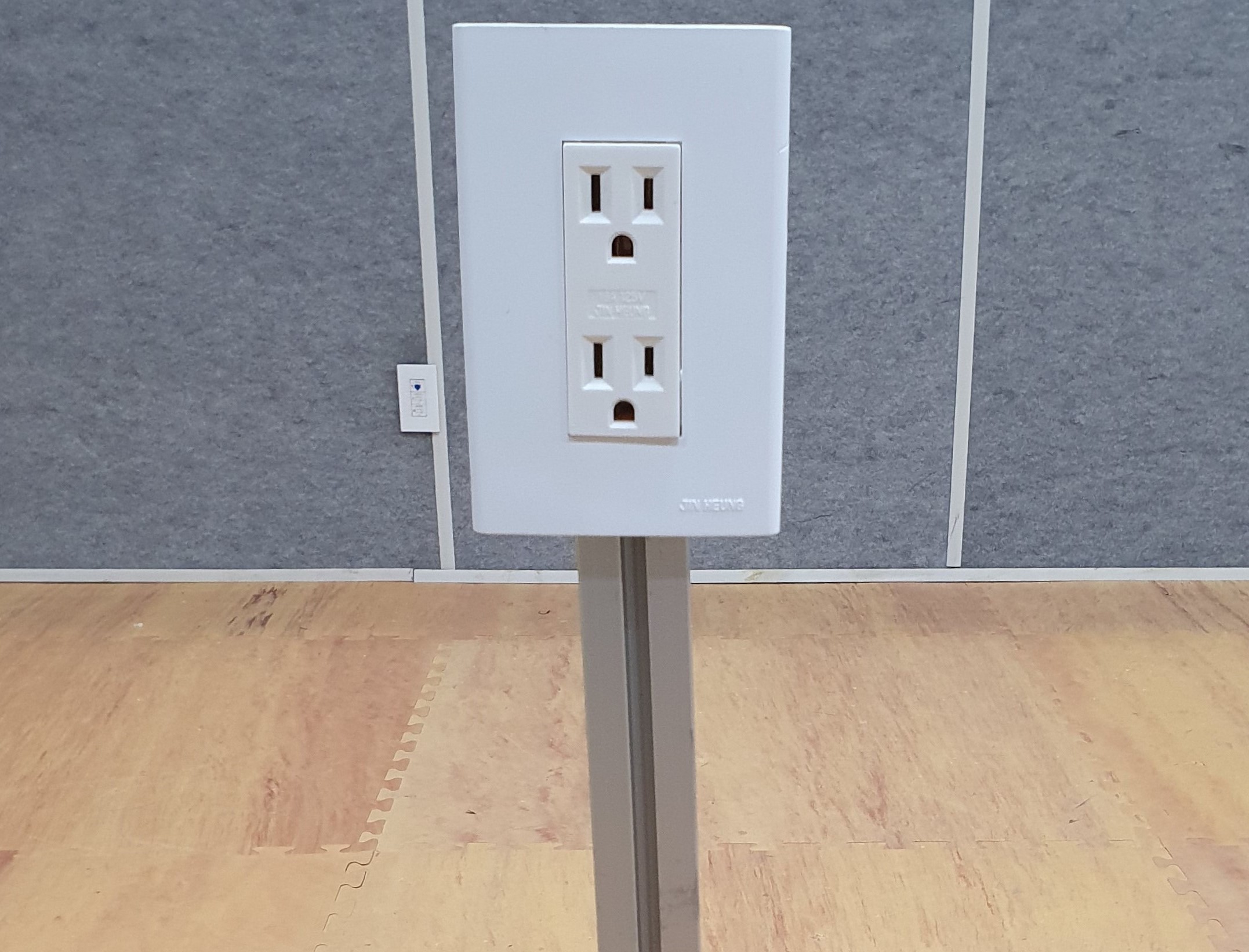}
\caption{Detail view}
\label{fig: socket_frame_detail_view}
\end{subfigure}
\caption{The stand for a 110 V socket used for the plug-pulling experiment}
\label{fig: socket_frame}
\vspace{-0.4cm}
\end{figure}

As depicted in Fig. \ref{fig: wire-pulling aerial manipulator}, the plug is connected to a wire held by the end-effector. Since this is different from the perching model described in Fig. \ref{fig: perching aerial manipulator with frames}, we modify some strategies for the trajectory generation and the controller design to maintain the end-effector's position fixed. For this, we turn on the position controller in $\hat{j}_I$ and $\hat{k}_I$ directions as in Fig. \ref{fig: control_frame}. 
\begin{figure}[t]
\centering
\includegraphics[width = 0.39\textwidth]{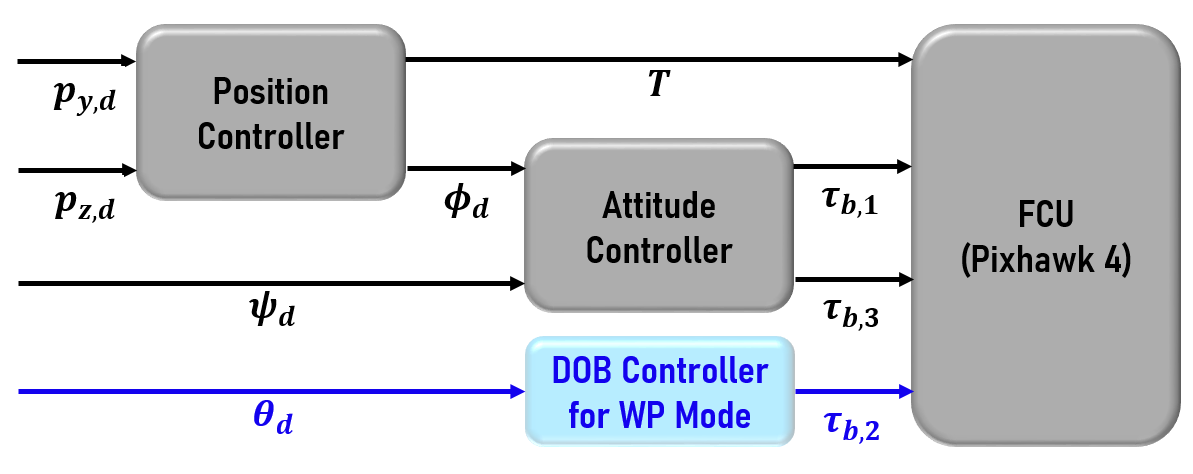}
\caption{Strategy for generating control input $u$ in WP mode while conducting the actual experiment. We use the controller presented in \cite{lee2020aerial_2} for the position and attitude control.} \label{fig: control_frame}
\vspace{-0.4cm}
\end{figure}

We set $\theta_m$ as $20$ deg and the stabilizing time $t_{d, ST} - t_{0, ST}$ as $0.08$ sec. Also, the threshold for the mode change from ST to FF, $\delta_{\eta}$, is set as $5$ deg.

The scenario of the experiment is as follows. At first, the aerial manipulator takes off until the end-effector aligns with the plug attached to the socket. Then, the vehicle flies in $-\hat{i}_I$ direction to tighten the wire. When the wire becomes tight enough, we send a command to begin the WP mode and the vehicle gradually tilts its body. By the time when the plug is separated from the socket, we manually switch to ST mode and the vehicle automatically turns into FF mode when the Euler angles become reasonably small. Finally, the aerial manipulator returns to the original position and maintains the hovering state.

\subsection{Experimental Results}
\begin{figure}[t]
\centering
\includegraphics[width = 0.45\textwidth]{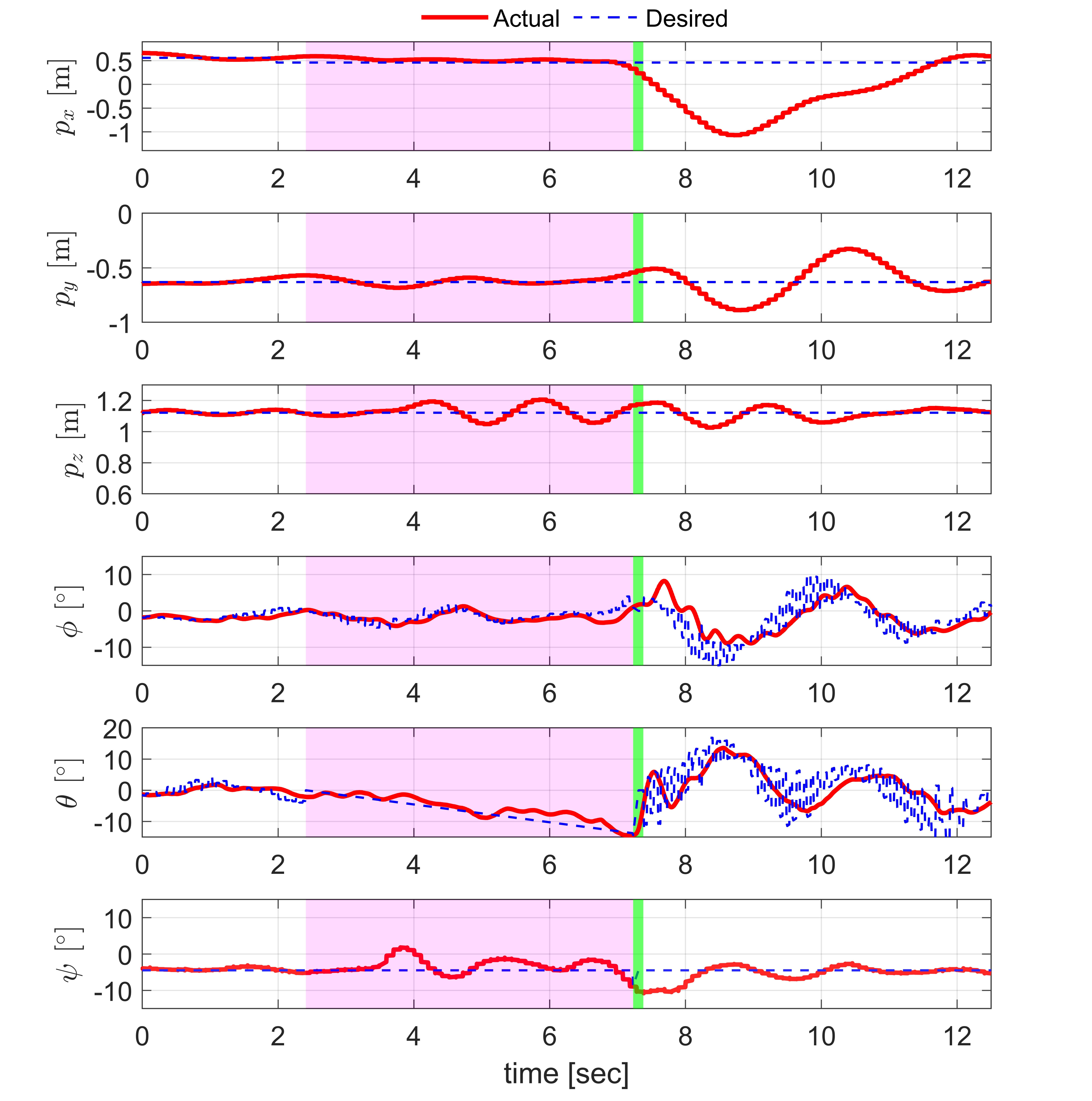}
\caption{History of the states of multirotor through the entire operative modes. A white region expresses the FF mode, a magenta region expresses the WP mode and a green region expresses the ST mode. The dashed blue line describes the desired values of the states. The red line represents the measured values of the states.} \label{fig: T_s_80ms_w_DOB_final}
\vspace{-0.6cm}
\end{figure}
Fig. \ref{fig: T_s_80ms_w_DOB_final} shows that the measured value of $\theta$ adequately follows $\theta_d$ until it reaches $-14.4$ deg. From this result, we can confirm that the control structure introduced in (\ref{eq: nominal dynamics of WP mode}) is valid for the given task. Also, from the plots of $p_x$, $p_y$ and $p_z$ in Fig. \ref{fig: T_s_80ms_w_DOB_final}, we can observe that the aerial manipulator recovers its original position at about 12 sec. This result shows that the hybrid automata of the aerial manipulator conducting the plug-pulling are stable and robust to the sudden change of interaction force. 

On the other hand, as can be seen in the attached video, if a standard PID controller is employed for the wire-pulling, $\theta$ cannot adequately follow the desired value $\theta_d$. Accordingly, after the plug is separated from the socket, the vehicle fails to maintain its stability and crashes to the floor.

\section{CONCLUSION}
This paper presents an aerial manipulator consisting of a multirotor and a 2-DOF robotic arm pulling a plug out of a socket. To demonstrate aerial plug-pulling, the concept of hybrid automata is used to divide the mission into three operative modes of wire-pulling, stabilizing, and free-flight. The strategy for trajectory generation and design of DOB controllers based on the dynamical models of each operative mode are presented. Then, we prove the overall stability and robustness of the plug-pulling and validate them through the actual experiment. As a result, we confirm that the pitch angle can robustly track {the desired pitch trajectory by our proposed DOB controller}, and the overall sequence of the plug-pulling task is executed without destabilization. For future work, autonomous grasping of a plug with an aerial manipulator can be included.






\bibliographystyle{IEEEtran}
\bibliography{myreference}

\end{document}